\documentclass{article}
\usepackage{preprint,times}

\usepackage{graphicx}
\usepackage{tikz}
\usetikzlibrary{arrows.meta,positioning,calc,fit,backgrounds}
\usepackage{pgfplots}
\pgfplotsset{compat=1.18}
\usepackage{subcaption}

\providecolor{pqBoxGrey}{RGB}{226,226,226}
\providecolor{pqBoxBlue}{RGB}{186,216,240}
\providecolor{pqBoxGreen}{RGB}{195,230,195}
\providecolor{pqBoxPurple}{RGB}{219,206,240}
\providecolor{pqBoxPink}{RGB}{250,203,209}
\providecolor{pqDarkRed}{RGB}{160,20,28}

\providecolor{pqLink}{RGB}{26,62,196}

\providecolor{pqBlue}{RGB}{31,119,180}
\providecolor{pqOrange}{RGB}{255,127,14}
\providecolor{pqGreen}{RGB}{44,160,44}
\providecolor{pqRed}{RGB}{214,39,40}

\providecolor{pqPlaneI}{RGB}{186,216,240}
\providecolor{pqPlaneII}{RGB}{112,171,214}
\providecolor{pqPlaneIII}{RGB}{34,106,168}
\providecolor{pqPlaneEdge}{RGB}{26,86,140}
\providecolor{pqSteel}{RGB}{12,82,135}

\providecolor{pqOrangeFill}{RGB}{252,186,118}
\providecolor{pqOrangeEdge}{RGB}{214,120,30}
\providecolor{pqOrangeText}{RGB}{170,74,12}
\providecolor{pqGreenFill}{RGB}{178,222,164}
\providecolor{pqGreenEdge}{RGB}{116,175,110}
\providecolor{pqGreenText}{RGB}{27,105,45}

\pgfplotsset{
  errplot/.style={
    xmin=0.7, xmax=4.3,
    xtick={1,2,3,4},
    xlabel={residual passes $P$},
    ylabel={relative error},
    ymin=4e-3, ymax=1,
    ytick={1e0,1e-1,1e-2},
    grid=major,
    grid style={gray!25},
    tick align=outside,
    label style={font=\scriptsize},
    tick label style={font=\scriptsize},
    ylabel style={yshift=-4pt},
    xlabel style={yshift=3pt},
  }
}

\usepackage{amsmath,amsfonts,bm}

\def\eqref#1{equation~\ref{#1}}

\def\1{\bm{1}}

\DeclareMathAlphabet{\mathsfit}{\encodingdefault}{\sfdefault}{m}{sl}
\SetMathAlphabet{\mathsfit}{bold}{\encodingdefault}{\sfdefault}{bx}{n}

\newcommand{\E}{\mathbb{E}}

\newcommand{\R}{\mathbb{R}}

\newcommand{\norm}[1]{\lVert #1 \rVert}
\newcommand{\Sph}{\mathbb{S}}
\newcommand{\AQuant}{ApexQuant}
\newcommand{\DecayLaw}{progressive decay law}
\newcommand{\DecayLawCap}{Progressive decay law}

\usepackage{hyperref}
\usepackage{url}
\usepackage{xcolor}
\usepackage{amsmath,amssymb}
\usepackage{booktabs}
\usepackage{tabularx}
\usepackage{amsthm}
\usepackage{mathtools}
\usepackage{cleveref}
\usepackage{multirow}

\newtheorem{proposition}{Proposition}
\newtheorem{corollary}{Corollary}
\newtheorem{lemma}{Lemma}

\title{ApexQuant: Data-Free Elastic Quantization \\ by Residual Re-Isotropization}

\author{Aksel Fristrup\thanks{Henselian (henselian.com)}\\
Department of Computer Science\\
University of Copenhagen\\
Denmark \\
\texttt{aksel@henselian.com}
\And{Sumit Pandey} \\
Towards Deep Learning\\
\texttt{sumitpandey171@gmail.com}
\And{Ankit Kariryaa} \\
Department of Computer Science\\
University of Copenhagen\\
Denmark \\
\texttt{ak@di.ku.dk} \\
}

\begin{document}
\pagestyle{plain}

\maketitle

\begin{abstract}

We introduce \AQuant{}, a calibration-free quantization method that recursively re-quantizes
the residual error, serving as a refinement layer on top of existing quantizers. We establish
that a fresh random rotation returns each residual to the uniform distribution on the
hypersphere, which characterizes the rate of progressive error decay across successive passes.
This result lets us determine, before any weight is read, how many passes a layer needs for a
target weight-space error. Every prefix is itself a valid lower-rate model, so one artifact
serves several precisions. We instantiate \AQuant{} with three interchangeable stages, scalar,
$E_8$ and trellis, and validate it on four open-weight LLMs and on Earth-observation and medical domains where in-distribution data is often
unattainable as imagery arrives under restrictive licences or due to patient material under privacy
constraints. Progressive re-isotropization comes within a few percent of full precision at
four bits and gives the best two-bit arm we measure, in a completely data-free setting.
\end{abstract}

\section{Introduction}
\label{sec:introduction}

\begin{figure}[t]
\centering
\includegraphics[width=0.98\linewidth]{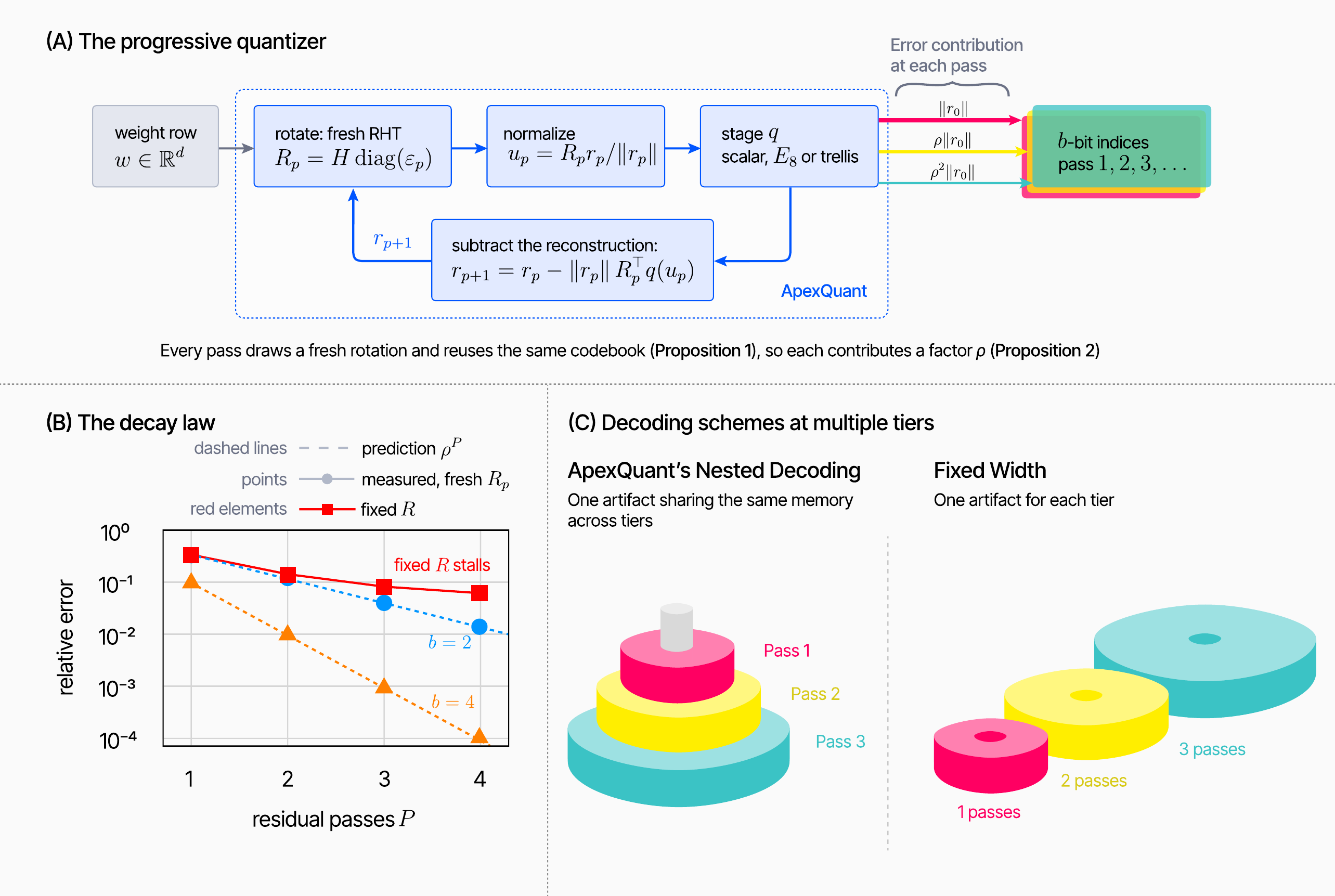}
\caption{\AQuant{}: the progressive quantizer end to end.
\textbf{(A)}~The three arrows into the index store are the error each pass contributes,
$1:\rho:\rho^{2}$, so later passes cost the same bits and correct geometrically less.
\textbf{(B)}~Dashed: the prediction $\rho^{P}$, a quadrature over $f_d$ as established in \cref{prop:decay}, with b=2 bits in blue, and b=4 bits in orange. Markers: difference measured on eight Clay v1.5 layers, agreeing to at least three decimals at
every pass.
Red: b = 2 bits with the rotation held \emph{fixed}, the counterfactual to
Proposition~\textcolor{pqLink}{1}.
\textbf{(C)}~A prefix of the passes is bit-for-bit a valid lower-rate model, so one artifact
serves every tier from one weight stream and a mixed batch reads each pass once. Storing one
artifact per tier instead shares nothing, which is the six blocks against ten: measured on a
Mistral block stack at tiers $(3,3,2,2)$, nesting stores $6.09$ against $10.16$ bits per
weight and decodes $1.56\times$ faster (\cref{sec:elasticmeasured}, \cref{app:serving}).}
\label{fig:overview}
\end{figure}

Large neural networks have become indispensable across scientific computing, software
development and natural-language applications, and their resource demands are growing at least
as fast as their capabilities \citep{SustainableAI}. Weight-only post-training
quantization (PTQ), which replaces high-precision weights with a compact low-precision
representation while requiring neither retraining nor architectural change, has therefore
become the dominant route to memory- and compute-efficient inference.

PTQ methods differ most consequentially in the data they require.
\emph{Calibration-dependent} methods fit quantization parameters to a held-out calibration
set; \emph{calibration-free} methods require no data at all, and can therefore quantize a
model the moment its weights exist. Calibration-free approaches avoid calibration-data bias entirely and remain applicable even when suitable calibration corpora cannot be assembled. Model hubs such as Hugging Face~\citep{fioravanti2025threemillion} now host over three million models spanning language, vision, audio and scientific domains; assembling a representative calibration set for these models is a challenging task and often infeasible due to legal restrictions. Earth-observation imagery, for instance, is often subject to licensing restrictions that prohibit redistribution, and sensitive clinical data remain confined to the institutions that collect them. A quantizer that requires no data sidesteps these obstacles entirely. Nonetheless, at low bit-widths calibration-free quantizers have
trailed calibration-dependent ones.

Rotation-based methods, which dominate the state of the art at low bit-widths, show that the
distribution can be fixed by construction. Multiplying the weights by a random
orthogonal transform, known as incoherence processing, suppresses outliers, after which the
rotated weights are quantized, either by adaptive rounding~\citep{ashkboos2024quarot} or codebook
lookup~\citep{tseng2024qtip, chee2023quip}. The consequence we exploit is that rotation
renders the source distribution known \emph{a priori}, so optimal codebooks can be designed
analytically, without observing any data. However, these codebooks are tied to a single bit-width chosen at quantization time. Several methods do support multiple bit-widths, including Any-Precision LLM~\citep{park2024anyprecision}, MoBiQuant~\citep{wang2026mobiquant} and AnyBCQ~\citep{park2026anybcq}, but all require calibration or training data. To our knowledge, no calibration-free elastic method exists.

In this paper we introduce \AQuant{}, a progressive PTQ method whose central contribution is
calibration-free elasticity, formalized by the \DecayLaw{} (\cref{prop:decay}). A single stored model exposes a
ladder of bit-widths to be chosen on demand. The method rests on the classical principle of
successive refinement of information~\citep{equitz1991successive}: each pass rotates the
current residual into a fresh frame, rounds it with the \emph{same} data-free
density-matched codebook, and subtracts the reconstruction, so $P$ passes yield a nested code
whose every prefix is itself a valid quantized model. We bring this elasticity to three stages,
the scalar Lloyd--Max codebook, QuIP\#'s $E_8$ lattice and QTIP's trellis. To make this elasticity hardware-efficient, we ship custom Triton kernels for the full pipeline: a register-resident Viterbi encoder that quantizes at $198$M weights per second ($68\times$ over a reference implementation), and fused dequantize-multiply kernels that serve mixed-precision batches from packed indices without reconstructing 16-bit weights. An optional single-batch calibration pass can further tighten the per-layer bit allocation with minimal data dependence (\cref{app:calib}). Code, kernels and reproduction scripts are available here: https://github.com/afristrup/ApexQuant.

First, we establish the optimality of progressive refinement at every pass. The enabling
result is re-isotropization (\cref{prop:reiso}): a fresh Haar rotation restores the
residual's spherical marginal $f_d$. Whichever codebook is optimal in its class is still
optimal at every pass, so nothing is ever re-fitted (\cref{cor:optimal}), and since the
argument never uses coordinate structure it covers the lattice and trellis stages too. Without the fresh rotation
the decay stalls, the residual retaining the structure the previous stage left behind
(\cref{fig:overview}B). Our theory is stated for Haar rotations; in practice, like QuIP\# and QTIP, we use the randomized Hadamard transform, which keeps the rotation $O(d\log d)$.
The gap between the two is at most two parts in a thousand on real weights (\cref{app:extra}).

Second, we derive the resulting \DecayLaw{} in closed form (\cref{prop:decay}):
$\mathrm{err}(P)=\rho^{P}$ in relative root-mean-square error, the constant from a quadrature
over a density that is fully determined prior to observing any weights. This prediction agrees with the error measured on pretrained weights to
three decimal places (\cref{fig:overview}B), and the same constant, without refitting, generalizes to non-language domains (\cref{sec:beyond}).

Third, we extend the theory to deployment. Because the two-bit representation \emph{is} the first pass of the four-bit one, a deployment that needs both precisions stores a single artifact rather than two independent files, and the model can be upgraded in place by appending indices. We demonstrate this nesting on Mistral-7B, where each transformer block is assigned one of two precisions (two or three passes, corresponding to approximately $4$ or $6$ bits per weight), yielding a mixed-precision configuration $(3,3,2,2)$ across blocks. Compared to the baseline of storing and loading one independent artifact per precision, the fused kernels decode this mixed-precision batch in $13.95$\,ms versus $21.76$\,ms ($1.56\times$ faster) and require $6.09$ versus $10.16$ bits per weight ($1.67\times$ less storage), because the shared prefix is read from memory only once (\cref{sec:elasticmeasured}). The two-bit prefix alone is not competitive as a standalone model; a calibrated trellis at that rate is far better (\cref{app:elastic}). Against five published data-free baselines at measured storage, \AQuant{} achieves the lowest KL at two bits on all four language models and, at four bits, the lowest among methods stored at our rate; against the published elastic methods it leads at four bits and trails a calibrated trellis at both rates (\cref{app:elastic}). As the method requires only weight access, it applies without modification to Earth observation and medical imaging (\cref{sec:beyond}).

\section{Background and related work}
\label{sec:background}

We consider three design axes for PTQ: the data requirement, the quantization dimension and
the elasticity. \Cref{tab:methods} places the methods closest to ours on all three.

\begin{table}[tb]
\centering
\caption{The methods closest to ours, on the three axes of this section. The data column is
what a method must read besides the weights, and the last is how many bit-widths one stored
model serves. \AQuant{} reads none, with the single optional exception of \cref{app:calib}.}
\label{tab:methods}
\footnotesize
\renewcommand{\arraystretch}{0.92}
\setlength{\tabcolsep}{5pt}
\begin{tabular}{@{}lllc@{}}
\toprule
Method & Data used & Quantizer & Bit-widths \\
\midrule
GPTQ, AWQ, OmniQuant & calibration set          & scalar            & one \\
AQLM                 & calibration set          & vector            & one \\
QuaRot               & calibration set          & scalar            & one \\
QuIP\#, QTIP         & calibration set (Hessian) & vector           & one \\
HQQ                  & none                     & scalar            & one \\
SINQ                 & none                     & scalar            & one \\
TurboQuant           & none                     & scalar            & one \\
Any-Precision LLM    & calibration set (seed)   & scalar            & several \\
MatQuant             & $10$--$100$M tokens      & scalar            & several \\
Drop-by-Drop         & $1024$ samples           & vector            & several \\
MoBiQuant            & $128$ sequences, $20$ epochs & scalar        & several \\
AnyBCQ               & $512$ sequences              & scalar  & several \\
\midrule
\AQuant{} (ours)     & none                     & scalar or vector  & several \\
\bottomrule
\end{tabular}
\end{table}

The majority of post-training quantization methods require calibration data: GPTQ, AWQ
and OmniQuant round each weight with parameters fitted on a calibration set
\citep{frantar2023gptq,lin2024awq,shao2024omniquant} and AQLM learns additive
vector codebooks from data \citep{egiazarian2024aqlm}. Other works removes this requirement, either through analytic bias correction \citep{nagel2019datafree} or through a synthesized calibration set
\citep{cai2020zeroq,xu2020gdfq}.

Among calibration based methods, the lowest reconstruction errors at low-bit results are achieved by rotation-based approaches, which transform the weight
matrix with a random orthogonal matrix, in practice a randomized Hadamard transform (RHT),
before quantizing it \citep{ashkboos2024quarot}. The rotation suppresses outliers by rendering it unlikely that any
single entry is large relative to the matrix's Frobenius norm, a property called incoherence
\citep{chee2023quip}, and QuIP\# showed the RHT delivers it with high probability at
$\Theta(d)$ sign flips and $\Theta(d\log d)$ additions \citep{tseng2024quipsharp}. The
rotation has a second, distributional consequence rooted in classical spherical geometry: under a Haar-random
rotation the transformed vector is uniform on the sphere, so each coordinate's marginal
density is known in closed form \citep{fang1990symmetric}.

\begin{lemma}[Spherical marginal]\label{lem:marginal}
    If $u$ is uniform on $\Sph^{d-1} = \bigl\{\mathbf{x} \in \mathbb{R}^d : \|\mathbf{x}\|_2 = 1\bigr\}$, each coordinate $u_i$ has density
    \begin{equation}
    f_d(t) \;=\; \frac{\Gamma\!\big(\tfrac{d}{2}\big)}
                      {\sqrt{\pi}\,\Gamma\!\big(\tfrac{d-1}{2}\big)}
                 \,\bigl(1-t^{2}\bigr)^{\frac{d-3}{2}},
    \qquad t\in[-1,1],
    \end{equation}
    so $(u_i+1)/2 \sim \mathrm{Beta}(\tfrac{d-1}{2},\tfrac{d-1}{2})$, $\E[u_i]=0$ and
    $\operatorname{Var}(u_i)=1/d$.
\end{lemma}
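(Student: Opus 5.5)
We obtain the density of $u_1$ by representing the uniform vector through Gaussians. By rotation invariance of the standard Gaussian, if $g\sim\mathcal N(0,I_d)$ then $u=g/\norm{g}_2$ is uniform on $\Sph^{d-1}$. Exchangeability of the coordinates means it suffices to treat $u_1$. Write
\[
u_1^2=\frac{g_1^2}{g_1^2+S},\qquad S=\sum_{i\ge2}g_i^2 .
\]
Here $g_1^2\sim\chi^2_1=\mathrm{Gamma}(\tfrac12,2)$ and $S\sim\chi^2_{d-1}=\mathrm{Gamma}(\tfrac{d-1}{2},2)$ are independent. The standard Gamma-ratio fact then gives $u_1^2\sim\mathrm{Beta}(\tfrac12,\tfrac{d-1}{2})$. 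Since $g\mapsto(-g_1,g_2,\dots)$ preserves the law, $u_1$ is symmetric about $0$, so its density is
\[
f_d(t)=|t|\,h(t^2)\quad\text{on }[-1,1],
\]
where $h$ is the Beta density. Substituting,
\[
|t|\cdot\frac{(t^2)^{-1/2}(1-t^2)^{\frac{d-3}{2}}}{B(\tfrac12,\tfrac{d-1}{2})},
\]
and using $B(\tfrac12,\tfrac{d-1}{2})=\sqrt\pi\,\Gamma(\tfrac{d-1}{2})/\Gamma(\tfrac d2)$, we recover the stated $f_d$. (Alternatively, one can compute the surface area of the slice $\{u_1=t\}$, a sphere of radius $\sqrt{1-t^2}$, times the Jacobian $(1-t^2)^{-1/2}$; but the Gaussian route avoids the normalizing-constant bookkeeping.)

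For the Beta claim, set $s=(t+1)/2$, so $1-t^2=4s(1-s)$ and $dt=2\,ds$. The transformed density is then proportional to $s^{\frac{d-3}{2}}(1-s)^{\frac{d-3}{2}}$ on $[0,1]$, which is $\mathrm{Beta}(\tfrac{d-1}{2},\tfrac{d-1}{2})$. Both are probability densities, so the constants must match and need not be checked separately.

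The moments follow from symmetry and exchangeability. Symmetry gives $\E[u_i]=0$. From $\sum_i u_i^2=1$ and exchangeability, $d\,\E[u_i^2]=1$, so $\operatorname{Var}(u_i)=1/d$. As a consistency check, the variance of $\mathrm{Beta}(a,a)$ with $a=\tfrac{d-1}{2}$ is $1/(4(2a+1))=1/(4d)$, and scaling by $2$ recovers $1/d$.

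The only delicate step is the Gamma-ratio identity together with the symmetric unfolding of $t^2$ into $t$, including the normalizing constant. We handle it via the Beta-function identity $\Gamma(\tfrac12)=\sqrt\pi$. We assume $d\ge2$; for $d=2$ the exponent is $-\tfrac12$, which is still integrable, so the argument goes through unchanged.
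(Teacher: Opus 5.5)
Your proof is correct. It takes a different route from the paper, which gives no written derivation of \cref{lem:marginal} at all. The paper cites \citet{fang1990symmetric} and obtains the scalar law as the case $B=1$ of the block marginal $f_{d,B}$ in \cref{lem:blockmarginal}. For that block law and its normalising constant it relies on the Lean development, which checks them against Mathlib's normalised surface measure on the sphere. The moments are only asserted, and $\E_{t\sim f_d}[t^{2}]=1/d$ is then used in the proof of \cref{prop:decay}. Your route makes the lemma self-contained and elementary: the Gaussian representation $u=g/\norm{g}$, the Gamma-ratio identity $u_1^{2}\sim\mathrm{Beta}(\tfrac{1}{2},\tfrac{d-1}{2})$, and the symmetric unfolding $f_d(t)=|t|\,h(t^{2})$. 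The normalising constant falls out of the Beta-function identity together with $\Gamma(\tfrac{1}{2})=\sqrt{\pi}$. Your integration-free argument $d\,\E[u_i^{2}]=\E\sum_i u_i^{2}=1$ is also cleaner than reading the variance off the Beta law. What the paper's formulation buys is the joint law of $B$ coordinates, which the $E_8$ and trellis quadratures in \eqref{eq:rho} actually need, stated in terms of the same surface measure the formalization uses throughout. Your method reaches that generality with no new idea: $\norm{x}^{2}=\sum_{i\le B}g_i^{2}/\norm{g}^{2}\sim\mathrm{Beta}(\tfrac{B}{2},\tfrac{d-B}{2})$, and unfolding this radial law in $\R^{B}$ through the surface area $2\pi^{B/2}/\Gamma(\tfrac{B}{2})$ of the unit sphere in $\R^{B}$ reproduces \eqref{eq:blockmarginal}, constant included. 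Your restriction to $d\ge2$ is the right one, since the normaliser contains $\Gamma(\tfrac{d-1}{2})$.
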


As $d$ grows this Beta distribution, rescaled by $\sqrt{d}$, converges to a Gaussian, a fact
exploited by \citet{tseng2024quipsharp} and \citet{tseng2024qtip} in their
codebook design. The second design axis is quantization dimension: encoding blocks of weights jointly rather than
individuals yields lower distortion at a given rate, which is the classical advantage of vector
quantization \citep{gersho1992vector}. QuIP\# encodes blocks of eight rotated
weights with the fixed $E_8$ lattice codebook \citep{tseng2024quipsharp}, and
QTIP encodes sequences of about $256$ weights with a trellis code whose codewords
are computed on the fly rather than stored in a table \citep{tseng2024qtip}.
Both operate at one fixed bit-width, so serving another means re-quantizing and
re-distributing the model.

Orthogonal to the rotation based-approaches, two data-free methods quantize unrotated weights and instead exploit structure that
the rotation deliberately removes. HQQ fits a per-group affine mapping by minimising an $l_p$
norm with $p{<}1$, whose sub-quadratic penalty on large residuals is better matched to the
heavy-tailed marginals of unrotated weights than a least-squares objective
\citep{badri2023hqq}; the method operates at group size $64$, which adds approximately $0.47$
bits per weight relative to $g{=}512$ but reduces the sensitivity of its min--max scale
estimate to outliers. SINQ estimates per-column scales by Sinkhorn-normalising the row and
column variances of $W$, obtaining a data-free approximation to the activation-aware column
scaling of AWQ \citep{muller2025sinq}; its calibrated variant A-SINQ replaces those
variance-derived scales with activation-fitted ones.
Both methods appear as baselines in
\cref{sec:experiments} at their native and at matched bit-budgets. Since the rotation that
provides us with a closed-form density also eliminates the tail shape and variance
heterogeneity on which HQQ and SINQ rely, the three approaches represent structurally distinct
strategies within data-free quantization.

Several methods support precision that can be selected after training, though all require calibration or training
data to do so. Any-Precision LLM constructs an 8-bit model whose leading bits already form the lower-bit
models \citep{park2024anyprecision}, MatQuant trains integer weights such that their leading
bits remain functional at 8, 4 and 2 bits\citep{nair2025matquant}, and Drop-by-Drop learns additive codebooks in which each stage
refines the reconstruction of the previous one \citep{babaoglu2026dropbydrop}. MoBiQuant quantizes the residuals into 1-bit slices and trains a router to select each token's precision at inference time \citep{wang2026mobiquant}. AnyBCQ extends binary-coded quantization to the multi-precision setting by progressively adding residual bit-planes whose binary codes are frozen at each level, so a single model supports $2$- through $4$-bit inference with a hardware-friendly bit-plane kernel \citep{park2026anybcq}.

The residual quantization mechanism we employ has a long history. Multi-stage vector quantization
originated in speech coding \citep{juang1982multistage}, and modern neural audio codecs are built on the same principle:
SoundStream and EnCodec quantize each residual with a further codebook, so a prefix of the
stages already decodes at a lower rate \citep{zeghidour2021soundstream,defossez2023encodec}.
The distinction from prior work is that those systems train their codebooks on
on the data being compressed and apply a single rotation fitted once and retained, whereas in our construction the rotation is
redrawn independently at every pass and fitted to no data, which enables \cref{prop:reiso} and yields a
closed form expression for the distortion curve .

\section{Re-isotropization and the \DecayLaw{}}
\label{sec:theory}

\begin{proposition}[Re-isotropization]\label{prop:reiso}
    Let $r\in\R^{d}\setminus\{0\}$ be a random vector and let $R$ be
    Haar-distributed on $O(d)$ (drawn uniformly at random from the group of
    $d\times d$ orthogonal matrices), independent of $r$. Then $u\coloneqq Rr/\norm{Rr}$
    is uniform on $\Sph^{d-1}$ and independent of $\norm{r}$; in particular each
    coordinate of $u$ has marginal $f_d$, regardless of the distribution of $r$.
\end{proposition}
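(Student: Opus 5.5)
The plan is to condition on $r$ and use the invariance of Haar measure; the distribution of $r$ then never enters. First note that $\norm{Rr}=\norm{r}$ because $R$ is orthogonal. Since $r\neq 0$, we have $u = R\,(r/\norm{r})$. Write $v\coloneqq r/\norm{r}\in\Sph^{d-1}$, so that $u=Rv$ with $R$ independent of the pair $(v,\norm{r})$.

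The core step is to show that for every fixed unit vector $v_0$, the vector $Rv_0$ is uniform on $\Sph^{d-1}$. Let $\sigma$ be the uniform (normalized surface) measure on $\Sph^{d-1}$; it is the unique Borel probability measure invariant under $O(d)$. For any $Q\in O(d)$, left-invariance of Haar measure gives $QR\overset{d}{=}R$, hence $Q(Rv_0)\overset{d}{=}Rv_0$. So the law of $Rv_0$ is an $O(d)$-invariant probability measure on the sphere, and by uniqueness it equals $\sigma$.

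If I want to avoid citing uniqueness, there is a direct route. Take $g\sim N(0,I_d)$, which is rotation invariant, so that $g/\norm{g}$ is $O(d)$-invariant. Then
\[
\E_R\!\left[h(Rv_0)\right]=\E_R\E_g\!\left[h\!\left(R\,\tfrac{g}{\norm{g}}\right)\right]
\]
for a suitable choice of Gaussian frame, and transitivity of $O(d)$ on the sphere makes this identification rigorous.

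Next I condition on $r$. For bounded measurable $h$ on the sphere and $\phi$ on $(0,\infty)$, independence of $R$ and $r$ together with Fubini give
\[
\E\bigl[h(u)\,\phi(\norm{r})\bigr]=\E_r\bigl[\phi(\norm{r})\,\E_R[h(Rv)]\bigr]=\E_r\bigl[\phi(\norm{r})\bigr]\int h\,d\sigma .
\]
This identity says at once that $u\sim\sigma$ (take $\phi\equiv1$) and that the joint law factorizes, so $u$ is independent of $\norm{r}$. It even shows $u$ is independent of the whole of $r$, which I will remark on. The coordinate marginal then follows immediately from \cref{lem:marginal}.

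I expect the main obstacle to be the measure-theoretic care in the conditioning step rather than any computation. I need joint measurability of $(R,r)\mapsto Rr/\norm{r}$, which holds because the map is continuous on $O(d)\times(\R^d\setminus\{0\})$. I also need to justify that the inner expectation $\E_R[h(Rv)]$ equals $\int h\,d\sigma$ for every $v$, not just almost every $v$. The pointwise invariance argument above covers all $v$, so no exceptional set arises.
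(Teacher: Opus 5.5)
Your proof is correct and follows essentially the same route as the paper: condition on $r$, use left-invariance of Haar measure to show the law of $R(r/\norm{r})$ is $O(d)$-invariant, invoke uniqueness of the invariant probability on the sphere, and get independence because this conditional law does not depend on $r$. Your Fubini computation with test functions $h,\phi$ just makes the paper's step ``the conditional law does not depend on $r$'' explicit; the Gaussian detour is unnecessary, and as sketched it would additionally need right-invariance of Haar measure plus transitivity, and still the separate fact that $g/\norm{g}$ has law $\sigma$.
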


Intuitively, a uniformly random rotation scrambles any input direction so that the
result points uniformly in all directions on the sphere, regardless of where it started.

Since $r$ may have any distribution, and in particular may be the original weight vector or the
residual of a previous pass, the statement applies at every pass (proof in \cref{app:proofs}). Setting $r_0=w$ and drawing mutually independent Haar rotations $R_0,R_1,\dots$, each
also independent of $w$, pass $p$ computes
\begin{equation}
\label{eq:pass}
    s_p = \norm{r_p}, \qquad
    u_p = R_p r_p / s_p, \qquad
    r_{p+1} = r_p - s_p\, R_p^{\top} q(u_p),
\end{equation}
where $q$ is the fixed stage quantizer, applied coordinate-wise in the scalar case and to
blocks otherwise. Each $r_{p+1}$ is a deterministic function of $(w,R_0,\dots,R_p)$, which
makes $R_{p+1}$ independent of it, so the proposition applies inductively at every pass.

In practice, as in QuaRot, QuIP\# and QTIP, we use the randomized Hadamard transform
$R=H\,\mathrm{diag}(\varepsilon)$, a finite family inside $O(d)$, so the orbit of a point under
it is finite and \cref{prop:reiso} is \emph{false} for it as stated. The gap is quantitative: on pretrained weights the deployed per-pass ratio departs from the Haar prediction by at most $1.6\%$ on individual layers at the first pass and by less than three parts in ten thousand at every later pass. Substituting a dense random-orthogonal rotation, which does satisfy \cref{prop:reiso}, moves the end-to-end error by at most two parts in a thousand at every $P$ (\cref{app:extra}).

\begin{corollary}[One codebook is optimal at every pass]\label{cor:optimal}
    Under~\eqref{eq:pass} the law (probability distribution) of $u_p$ is the same for every $p\ge 0$. Let
    $\mathcal{C}$ be any class of codebooks $C\subset\R^{d}$ (e.g., all $2^b$-level
    product codebooks, or all $E_8$ lattice codebooks) and write
    $D(C)=\E_u\operatorname{dist}(u,C)^{2}$ for the expected squared distance
    from $u$ uniform on $\Sph^{d-1}$ to its nearest codeword. If $C^{\star}$
    minimizes $D$ over $\mathcal{C}$, then nearest-codeword rounding to
    $C^{\star}$ minimizes $\E\norm{u_p-q(u_p)}^{2}$ at every residual pass among
    all quantizers with values in a member of $\mathcal{C}$, and no per-pass
    re-fitting of the codebook can improve on it in expectation. For
    $\mathcal{C}$ the $2^{b}$-level product codebooks, $C^{\star}$ is the
    Lloyd--Max codebook for $f_d$ in every coordinate; for blocks of eight
    $\mathcal{C}$ ranges over $E_8$ codebooks such as E8P, and for blocks of $T$
    over trellis codes.
\end{corollary}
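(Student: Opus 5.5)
The argument has two parts: first show that the law of $u_p$ does not depend on $p$, then reduce the optimality claim to the single-pass problem $\min_{C\in\mathcal{C}} D(C)$.

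\emph{The law of $u_p$.} I would argue by induction along the recursion \eqref{eq:pass}. The induction hypothesis is that $r_p$ is a measurable function of $(w,R_0,\dots,R_{p-1})$. This holds at $p=0$, since $r_0=w$. Because the rotations are mutually independent and independent of $w$, $R_p$ is independent of $r_p$. If $r_p\neq0$ almost surely, \cref{prop:reiso} applies and gives $u_p\sim\mathrm{Unif}(\Sph^{d-1})$. For the inductive step, $r_{p+1}=r_p-s_pR_p^\top q(u_p)$ is a function of $(w,R_0,\dots,R_p)$, so the hypothesis propagates.

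The degenerate event $r_p=0$ needs a small convention. When $q$ has finitely many values, this event has probability zero: $u_p$ has a density, and $r_{p+1}=0$ requires $u_p=q(u_p)$, which would place $u_p$ in the finite codebook. Alternatively, one can set $u_p$ to an independent uniform draw on this null event. Either way, every $u_p$ has the law of $u$, which proves the first sentence of the statement.

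\emph{The optimality claim.} Fix $p$. Let $\tilde q$ be any quantizer with values in some $C\in\mathcal{C}$, where $\tilde q$ and even the choice of $C$ may depend on the past $(w,R_0,\dots,R_{p-1})$, i.e.\ on per-pass re-fitting. Pointwise, $\norm{u_p-\tilde q(u_p)}^2\ge\operatorname{dist}(u_p,C)^2$.

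The key step is that $u_p$ is uniform \emph{conditionally} on the past, not merely marginally. To show this I would rerun the argument of \cref{prop:reiso} conditionally: given $\mathcal{F}_{p-1}=\sigma(w,R_0,\dots,R_{p-1})$, the vector $r_p$ is fixed, and $R_p$ is still Haar and independent of $\mathcal{F}_{p-1}$. Hence $R_pr_p/\norm{r_p}$ is uniform on the sphere, because Haar measure acting on any fixed unit vector yields the uniform law.

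Conditioning, we get $\E[\norm{u_p-\tilde q(u_p)}^2\mid\mathcal{F}_{p-1}]\ge D(C)\ge D(C^\star)$. Nearest-codeword rounding to $C^\star$ attains $D(C^\star)$ exactly, by the same conditional uniformity. Taking expectations shows that no re-fitting improves on $C^\star$ at any pass.

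\emph{The identification of $C^\star$ in each class.} For $2^b$-level product codebooks $C=\prod_i C_i$, nearest-codeword rounding decouples by coordinate: $D(C)=\sum_i\E\operatorname{dist}(u_i,C_i)^2$. Each term depends only on the marginal $f_d$ from \cref{lem:marginal}, so it is minimized coordinatewise by the Lloyd--Max quantizer for $f_d$. Here I am reading ``product codebook'' as one scalar codebook per coordinate. For $E_8$ and trellis codes the statement is essentially definitional: the minimizer of $D$ over that class is whatever the class-optimal code for the uniform spherical source is, e.g.\ E8P as a scaled member. In both cases I would only note that the argument above never used coordinate structure.

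\emph{Main obstacle.} The step I expect to need the most care is the conditional uniformity, because \cref{prop:reiso} is stated only marginally. That is what rules out adaptive re-fitting, and it needs independence of $R_p$ from the whole past, not just from $r_p$.
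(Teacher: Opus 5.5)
Your proposal is correct and follows the paper's route. The fresh rotation $R_p$ is independent of $(w,R_0,\dots,R_{p-1})$, of which $r_p$ is a function, so \cref{prop:reiso} applies inductively and fixes the law of $u_p$; optimality then reduces to the single-pass minimization of $D$ over $\mathcal{C}$, and product codebooks decouple coordinate-wise onto the marginal $f_d$. Your conditioning on the whole past, which covers re-fitting that depends on earlier passes, and your handling of the null event $r_p=0$, which the Lean development instead assumes away by requiring that no pass is exact, refine the paper's marginal argument rather than change its approach.
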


Since the class $\mathcal{C}$ is unrestricted, each stage listed in \cref{tab:stages} is a
direct instantiation of the corollary: setting $\mathcal{C}$ to the set of $2^b$-level product
codebooks yields the Lloyd--Max codebook for $f_d$ as $C^{\star}$; setting it to $E_8$
codebooks or to trellis codes of a given constraint length yields the corresponding optimum.
In each case the optimal codebook is computed once, independently of the model weights, and
reused without modification at every pass.

Two qualifications apply, and neither of which concern the scale $s_p$, which is unconstrained and is re-estimated at every pass. First, the corollary compares codebooks \emph{within} a class and does not rank codebooks across
classes: it establishes
that the optimum within a class is invariant across passes, but the ordering between the
scalar, $E_8$ and trellis stages is determined by the empirical $\rho^{2}$ values reported in
\cref{tab:stages}. Second, the distortion measure is squared error on the normalized
residual direction, a weight-space proxy rather than end-to-end quality, which is why
\cref{sec:experiments} reports KL alongside $\rho$.

\begin{proposition}[\DecayLawCap{}]\label{prop:decay}
    Let $q$ be any measurable stage quantizer with stage distortion
    $\rho^{2}=\E_u\norm{u-q(u)}^{2}$, $u$ uniform on $\Sph^{d-1}$. Then
    under~\eqref{eq:pass},
    $\E\bigl[\norm{r_{p+1}}^{2}\bigr]=\rho^{2}\,\E\bigl[\norm{r_{p}}^{2}\bigr]$
    for every $p$, and hence $\sqrt{\E\norm{r_P}^{2}}\,/\,\norm{w}=\rho^{P}$.
    If $q$ applies one block quantizer $q_B$ to each of the $d/B$ blocks of $B$
    coordinates, then with $f_{d,B}$ the law of any $B$ coordinates of $u$,
    \begin{equation}
    \label{eq:rho}
    \rho^{2} \;=\;
    \frac{\E_{x\sim f_{d,B}}\norm{x-q_B(x)}^{2}}{\E_{x\sim f_{d,B}}\norm{x}^{2}}
    \;=\; \frac{d}{B}\,\E_{x\sim f_{d,B}}\norm{x-q_B(x)}^{2},
    \end{equation}
    which at $B=1$ reads $\rho^{2}=d\,\E_{t\sim f_d}[(t-q(t))^{2}]$.
\end{proposition}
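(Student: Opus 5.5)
The plan is to compute the one-step recursion first, conditioning on the past, and then iterate it.

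\textbf{One-step identity.} From \eqref{eq:pass}, since $R_p$ is orthogonal,
\[
\norm{r_{p+1}} = \norm{R_p r_{p+1}} = \norm{R_p r_p - s_p\, q(u_p)} = s_p\,\norm{u_p - q(u_p)},
\]
so that $\norm{r_{p+1}}^{2} = s_p^{2}\,\norm{u_p-q(u_p)}^{2}$.

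\textbf{Taking expectations.} By \cref{prop:reiso}, applied to $r=r_p$ (a function of $(w,R_0,\dots,R_{p-1})$ and hence independent of $R_p$), $u_p$ is uniform on $\Sph^{d-1}$ and independent of $s_p=\norm{r_p}$. The expectation therefore factors:
\[
\E\norm{r_{p+1}}^{2} = \E[s_p^{2}]\;\E\norm{u_p-q(u_p)}^{2} = \rho^{2}\,\E\norm{r_p}^{2}.
\]
The only care needed is that $q$ is measurable, so the second factor is a well-defined expectation; it may be infinite, in which case the identity still holds in $[0,\infty]$. Separately, we must handle $r_p=0$, where $u_p$ is undefined. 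I would adopt the convention that a zero residual stays zero; the event $r_p=0$ contributes zero on both sides, so the proposition can be applied conditionally on $r_p\neq 0$.

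\textbf{Iterating.} Induction gives $\E\norm{r_P}^{2}=\rho^{2P}\norm{w}^{2}$, taking $w$ deterministic, or conditioning on $w$. Square roots then yield the stated ratio $\rho^{P}$.

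\textbf{Block formula.} Write $u=(x_1,\dots,x_{d/B})$ in blocks. Because $q$ acts blockwise,
\[
\norm{u-q(u)}^{2}=\sum_j \norm{x_j-q_B(x_j)}^{2}.
\]
By permutation invariance of the uniform law on the sphere, every block $x_j$ has the same law $f_{d,B}$, so
\[
\rho^{2}=\frac{d}{B}\,\E_{x\sim f_{d,B}}\norm{x-q_B(x)}^{2}.
\]
To get the ratio form, note that $\E\norm{x}^{2}=B/d$. This follows from $\operatorname{Var}(u_i)=1/d$ and $\E u_i=0$ in \cref{lem:marginal}, or directly from $\sum_i u_i^{2}=1$ together with exchangeability. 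The case $B=1$ is then immediate, with $f_{d,1}=f_d$.

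\textbf{Main obstacle.} The step I expect to need the most care is the independence claim justifying the factorization. Specifically, $u_p$ must be independent of $s_p$ jointly, not merely have the right marginal. This is exactly what \cref{prop:reiso} supplies, provided the filtration argument stated after \eqref{eq:pass} holds: $R_p$ must be independent of $(w,R_0,\dots,R_{p-1})$, and hence of $r_p$.
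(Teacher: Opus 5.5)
Your proposal is correct and follows the paper's proof essentially step for step. The isometry gives $\norm{r_{p+1}}^{2}=s_p^{2}\norm{u_p-q(u_p)}^{2}$; \cref{prop:reiso}, using that $R_p$ is independent of $(w,R_0,\dots,R_{p-1})$, lets the expectation factor; and induction from $r_0=w$ yields $\rho^{2P}$. The differences are refinements the paper leaves implicit: the paper spells out only the coordinate-wise case $B=1$ (with $\E_{t\sim f_d}t^{2}=1/d$), whereas you handle general $B$ via exchangeability and $\E\norm{x}^{2}=B/d$, and you correctly treat $r_p=0$ and infinite distortion, which the informal proof does not address.
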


Both integrals in \eqref{eq:rho} are quadratures over a density known in closed form
(\cref{lem:blockmarginal}), so the error curve and the pass rule
$P=\lceil\ln\varepsilon/\ln\rho\rceil$ are available before any weight is read.
The proof (\cref{app:proofs}) depends only on the fact that $R_p$ is an isometry and that
$u_p$ is independent of $s_p$ with a law that does not vary with $p$; it never uses the
coordinate structure of $q$. Consequently, a stage that does not factor over blocks, such as a
trellis code over $T$ coordinates, is accommodated by the definition of $\rho^2$ directly.
The quantity $\rho^{P}$ measures the gap between the achieved distortion and the
rate--distortion bound, incurred once per pass,
and is therefore a property of the stage
quantiser rather than of the recursion itself; it is reduced by improving the stage, as
reflected in the rate-loss column of \cref{tab:stages}.

\paragraph{Lean~4 check.}
\label{sec:lean}
All results in this section except \cref{prop:greedy}, which is arithmetic, is machine-checked
against Mathlib with no axiom of its own (\cref{lean:optimal}).

\begin{proposition}[The price of elasticity]
\label{prop:greedy}
Let a stage have per-pass ratio $\rho^{2}$ at $b$ bits per coordinate, and write
$\Delta_{1}=b+\tfrac{1}{2}\log_{2}\rho^{2}$ for its distance in bits to the Gaussian
rate--distortion function $D(R)=2^{-2R}$. Then $P$ passes store $Pb$ bits and reach
$\rho^{2P}$, which an optimal code reaches at $Pb-P\Delta_{1}$, so the greedy penalty is
exactly $P\Delta_{1}$.
\end{proposition}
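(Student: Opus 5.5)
The argument is bookkeeping on top of \cref{prop:decay}, carried out in three steps.

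\textbf{Step 1: rate and distortion of $P$ passes.} Each pass stores one index per coordinate at $b$ bits, so by \eqref{eq:pass} the nested code uses $Pb$ bits per coordinate; the scales $s_p$ are a per-vector overhead and are excluded from the per-coordinate rate. By \cref{prop:decay}, the normalized mean-squared error after $P$ passes is
\[
\E\norm{r_P}^2/\norm{w}^2=\rho^{2P}.
\]

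\textbf{Step 2: the rate an optimal code needs.} Use the stated convention that the reference is the Gaussian rate--distortion function with unit variance per coordinate, $D(R)=2^{-2R}$. This is natural because \cref{lem:marginal} gives per-coordinate variance $1/d$, which matches the relative error normalization. Inverting, $R(D)=-\tfrac12\log_2 D$. Then
\[
R\bigl(\rho^{2P}\bigr)=-\tfrac{P}{2}\log_2\rho^2 .
\]
The definition of $\Delta_1$ gives $-\tfrac12\log_2\rho^2=b-\Delta_1$, so $R(\rho^{2P})=P(b-\Delta_1)=Pb-P\Delta_1$.

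\textbf{Step 3: the penalty.} Subtracting, the greedy code pays
\[
Pb-\bigl(Pb-P\Delta_1\bigr)=P\Delta_1
\]
bits more than the optimal code at the same distortion. The penalty is exactly linear in $P$ because $\log_2$ turns the product $\rho^{2P}$ into a sum of $P$ equal per-pass gaps.

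\textbf{Main subtlety.} There is no real analytic difficulty; the one place needing care is interpretation. The identity holds in the sense of expected normalized distortion, the quantity \cref{prop:decay} controls, and not almost surely. I would state explicitly that ``reach'' means expected distortion, and that $D(R)=2^{-2R}$ is taken as the benchmark by definition, not derived as the true rate--distortion function of the spherical source. For the spherical source that function is only asymptotically Gaussian, which is why the statement calls the result arithmetic.
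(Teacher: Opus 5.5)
Your proposal is correct and follows the paper's own argument, which is the same arithmetic: invert $D(R)=2^{-2R}$ at the distortion $\rho^{2P}$ supplied by \cref{prop:decay}, then substitute $-\tfrac12\log_2\rho^{2}=b-\Delta_1$ to obtain $Pb-P\Delta_1$. Your two caveats are sound and make explicit what the paper leaves implicit: ``reach'' refers to expected normalized distortion, and the Gaussian $D(R)$ is a benchmark rather than the true rate--distortion function of the spherical source.
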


Successive refinement is free for a Gaussian source under an optimal stage
\citep{equitz1991successive}; \cref{prop:greedy} shows that a suboptimal stage incurs
exactly $\Delta_{1}$ additional bits per pass, with the penalty accumulating additively rather
than multiplicatively. The rate-loss column of \cref{tab:stages} reports $\Delta_{1}$ for each
stage, and it is this quantity that improved stage design reduces, as visualized in \cref{fig:rateloss}. \Cref{cor:optimal}
establishes optimality within a codebook class, not relative to the rate--distortion function
$D(R)$; the property the construction preserves is that every pass operates in the same fixed
format.

\begin{figure}[t]
\centering
\includegraphics[width=0.85\linewidth]{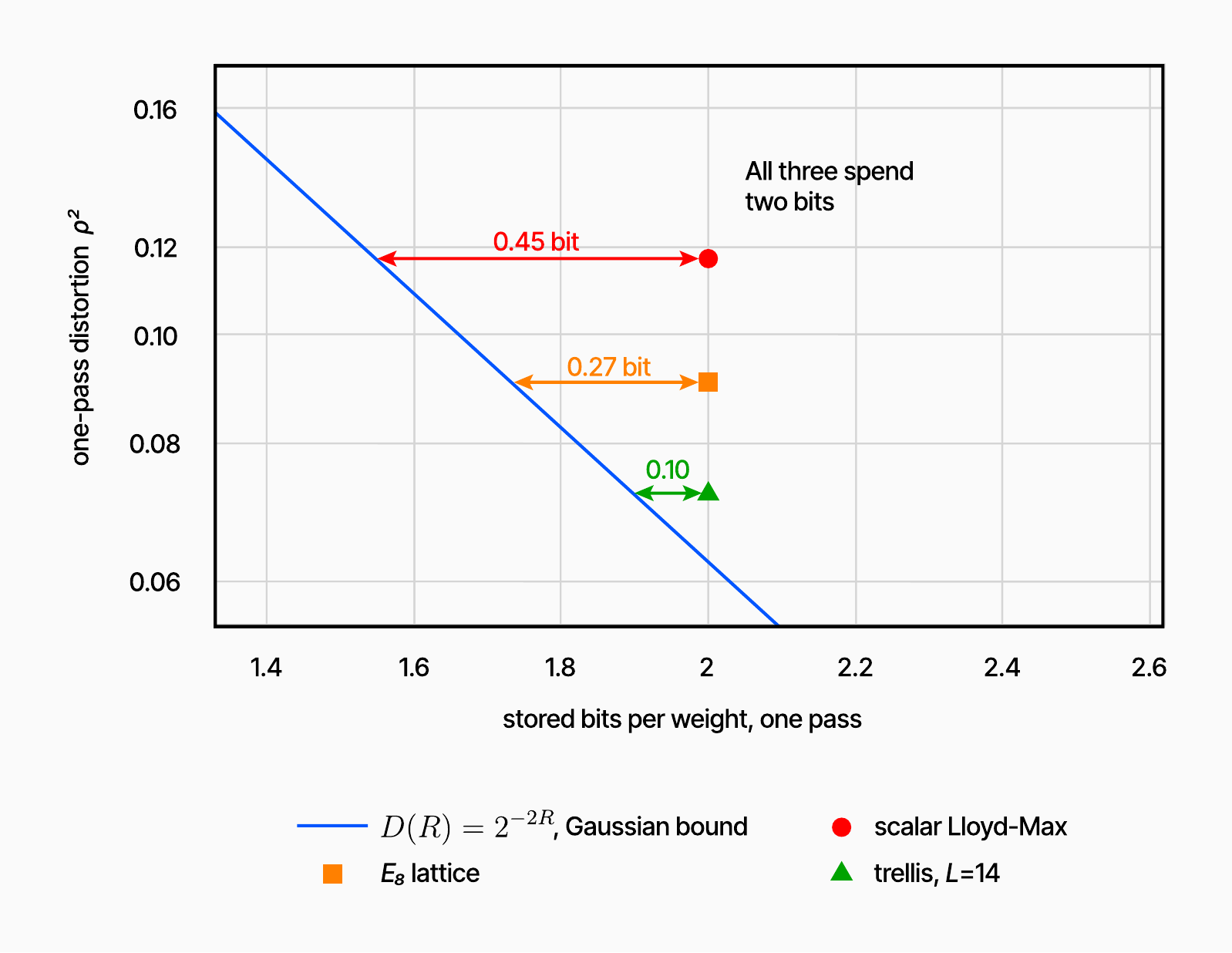}
\caption{The rate loss of \cref{tab:stages}, read off the rate--distortion bound. Each stage
sits above the Gaussian bound $D(R)=2^{-2R}$ by a horizontal distance measured in bits, and
strengthening the stage, from rounding one coordinate at a time to choosing $256$ together,
closes that distance from $0.45$ to $0.10$ bits. That distance is \cref{prop:greedy}'s
$\Delta_{1}$, and $P$ passes pay exactly $P$ of them: successive refinement is costless for a
Gaussian source, but only for a stage that achieves $D(R)$ exactly. Measured rather than argued
in \cref{app:llmtable}.}
\label{fig:rateloss}
\end{figure}

\section{\AQuant{}}
\label{sec:method}

\AQuant{} acts as a refinement layer on top of existing quantizers, in the spirit of
multistage quantization \citep{juang1982multistage}: because the decay law constrains the stage quantizer only through its distortion (\cref{prop:decay}), \eqref{eq:rho} fixes the error curve of whichever stage is dropped in. We instantiate the layer with three
data-free quantizers, called \emph{stages} as in the multistage literature,
in increasing order of strength;
\cref{tab:stages} collects their $\rho^{2}$ constants, and \cref{fig:stages} visualizes the encoding flow.

\subsection{Three data-free stages}
\label{sec:stages}

\emph{Scalar.} The scalar stage rounds each coordinate of $u$ to the
nearest value of the Lloyd--Max codebook for $f_d$ from
\cref{sec:background}, giving $\rho^{2}=0.1171$ at $b=2$ and
$d=512$, which is within one percent
of the 0.118 that TurboQuant published~\cite{zandieh2025turboquant}. The codebook is the same as in TurboQuant, which proposed it for KV cache quantization, and is determined entirely by $f_d$. Since quantization requires no optimization, the quantized model is available immediately upon encoding.

\emph{$E_8$ lattice.}
The lattice stage rounds eight coordinates at a time to the nearest point of
the $E_8$ lattice, the densest sphere packing in eight dimensions
\citep{viazovska2017sphere}, using the compact codebook of QuIP\#
\citep{tseng2024quipsharp}. On rotated unit vectors we measure
$\rho^{2}=0.0906$, within two percent of the $0.089$ that QuIP\# publishes. At three or more bits, QuIP\# also
recurses on the residual but without a fresh rotation between rounds, which is the key
ingredient that enables the data-free elasticity of \cref{sec:elastic} and without which the decay stall. As with the scalar stage, the quantized model is available immediately upon encoding.

\begin{table}[tb]
\centering
\caption{The three stages evaluated at two bits per weight. $\rho^{2}$ is the single-pass error of
\eqref{eq:rho}, the published column reports the values given by QTIP and QuIP\# for a Gaussian source; and rate loss denotes the additional bits per pass relative to the final row, computed as $\tfrac{1}{2}\log_2(\rho^{2}/
\rho^{2}_{\mathrm{RD}})$ from our measured column. The trellis row uses $L=14$, the widest our
kernel accelerates (\cref{app:kernels}).}
\label{tab:stages}
\small
\setlength{\tabcolsep}{5pt}
\begin{tabular}{lccccl}
\toprule
Stage & Dim. & $\rho^{2}$ (ours) & $\rho^{2}$ (publ.) & Rate loss (bits/pass) & Encoding \\
\midrule
Scalar Lloyd--Max & 1 & 0.1171 & 0.118 & 0.45 & nearest of $2^{b}$ values \\
$E_8$ lattice & 8 & 0.0906 & 0.089 & 0.27 & nearest lattice point \\
Trellis ($L=14$) & 256 & 0.0720 & 0.069 & 0.10 & Viterbi search \\
\midrule
Rate--distortion bound & $\infty$ & --- & 0.0625 & 0 & --- \\
\bottomrule
\end{tabular}
\end{table}

\emph{Trellis.} Trellis coded quantization \citep{marcellin1990trellis} encodes $T$
coordinates as a walk on a $2^{L}$-state graph, the minimum-error walk being found by the
Viterbi algorithm \citep{forney2005viterbi} in $O(2^{L}T)$ time rather than the $2^{bT}$ of an
unstructured codebook. We adopt QTIP's bitshift trellis with lookup-free computed codes
\citep{tseng2024qtip}, which stores neither graph nor codebook and so is as data-free as the
other two stages, and we replace its Hessian-guided rounding step with our residual passes.
On rotated unit vectors we measure $\rho^{2}=0.0720$ at $L=14$, a configuration reached by the
Triton Viterbi encoder we contribute (\cref{app:kernels}).

The rate-loss column of \cref{tab:stages} is \cref{prop:greedy}'s $\Delta_{1}$, against the
Gaussian bound $\rho^{2}=2^{-4}=0.0625$ at two bits \citep{cover1991elements}. It falls with the
dimension quantized jointly, $0.45$ bits for the scalar stage, $0.27$ for $E_8$, $0.10$ for the
trellis.

\subsection{Elastic inference}
\label{sec:elastic}

Since each pass quantizes only the residual and never changes the output of
previous passes, a model quantized with $P$ passes can be read at any
prefix $P'\le P$ and the result is bit-identical to a run stopped at $P'$. We call this
\emph{nested decoding}. Reading the first $P'$ passes of the stored payload is not an
approximation of the $P'$-pass artifact, it \emph{is} that artifact, checked by byte comparison in
\cref{sec:elasticmeasured}. Therefore, one
stored file serves several precisions, a deployed model upgrades to $P+1$ passes in
place by appending indices, and a batch may mix quality tiers at the cost of its deepest tier
alone.

That last property is where nesting provides its principal advantage, as it nests the \emph{execution} path and not
merely the stored representation. Rows are sorted by descending precision tier, so that the
rows consuming pass~$p$ form a contiguous prefix; the matrix multiplication
then proceeds through passes in order, with pass one read once for the
entire batch and pass two read only for rows requiring two or more passes. The resulting weight traffic equals that of the highest-precision tier
alone, rather than the sum across individual rows. By contrast, fixed-width
elasticity, i.e. one artifact per tier, shares no data between tiers and
incurs precisely that sum (\cref{sec:elasticmeasured},
\cref{app:serving}, which also distinguishes this mechanism from
speculative decoding).

At inference, the stored representation is never decompressed into 16-bit weights: since
$x^{\top}\widehat{w} = s\,(Rx)^{\top} q(u)$, the fused kernels rotate the activation once per
pass and multiply directly against codebook values gathered from the stored $Pb$-bit indices.

\section{Experiments}
\label{sec:experiments}

\emph{Setup.} We evaluate TinyLlama-1.1B \citep{zhang2024tinyllama}, Phi-1.5
\citep{li2023phi15}, Mistral-7B \citep{jiang2023mistral} and Qwen2.5-7B
\citep{qwen2024qwen25} on WikiText-2 raw at sequence length $2048$ \citep{merity2017pointer},
and beyond language (\cref{sec:beyond}), the Earth observation foundation model Clay v1.5 \citep{clay2024}
on EuroSAT \citep{helber2019eurosat} and the pathology foundation model UNI \citep{chen2024uni} on PatchCamelyon \citep{veeling2018patchcamelyon}.
Our primary metric is
$\mathrm{KL}(p_{\mathrm{orig}}\,\|\,p_{\mathrm{quant}})$ per token against the unquantized
model, following the model-preservation view of \citet{dutta2024accuracy}.

\begin{table}[tb]
\centering
\caption{Language models, $\mathrm{KL}(p_{\mathrm{fp16}}\,\|\,p_{\mathrm{quant}})$ in nats per
token on WikiText-2 raw at sequence length $2048$, lower is better.
Bits per weight is \emph{measured} storage including side
information, which puts the group-size-$64$ arms of HQQ and SINQ at $4.50$ and $4.53$ against
our $4.06$; both also appear at $g=512$, matched to ours. SINQ's overhead is shape-dependent,
so its entries are bounds.
In the $4$-bit class the
scalar stage is one pass at four bits and the vector stages two passes at two, the only $4$-bit
form here keeping a valid $2$-bit prefix. Every variant here is data-free; and \cref{app:llmtable} includes the extended results with the calibrated
counterpart A-SINQ as well as perplexity, tail
percentiles, the $l_p$ codebook and a group-size sweep. Model abbreviations: TL = TinyLlama-1.1B, Phi=Phi-1.5, Mist.=Mistral-7B, and Qwen = Qwen2.5-7B.
}
\label{tab:llm-full}
\scriptsize
\setlength{\tabcolsep}{4pt}
\renewcommand{\arraystretch}{0.92}
\begin{tabular}{@{}l|rrrrr|rrrrr@{}}
\toprule
& \multicolumn{5}{c|}{$4$-bit class} & \multicolumn{5}{c}{$2$-bit class} \\
Method & b/w & TL & Phi & Mist. & Qwen & b/w & TL & Phi & Mist. & Qwen \\
\midrule
RTN-absmax               & $4.03$ & $4.419$ & $0.130$ & $0.247$ & $0.148$ & $2.03$ & $8.056$ & $6.742$ & $8.939$ & $14.532$ \\
QuaRot-RTN               & $4.03$ & $0.116$ & $0.059$ & $0.054$ & $0.072$ & $2.03$ & $8.144$ & $2.552$ & $8.543$ & $4.217$ \\
SINQ, $g=64$             & $\leq\!4.53$ & $\mathbf{0.035}$ & $0.028$ & $\mathbf{0.019}$ & $0.039$ & $\leq\!2.53$ & $4.537$ & $1.431$ & $2.107$ & $6.152$ \\
SINQ, $g=512$            & $\leq\!4.09$ & $0.069$ & $0.050$ & $0.040$ & $0.096$ & $\leq\!2.09$ & $10.230$ & $4.476$ & $8.887$ & $9.985$ \\
HQQ, $g=64$              & $4.50$ & $0.046$ & $0.033$ & $0.025$ & $0.044$ & $2.50$ & $8.115$ & $1.912$ & $6.361$ & $4.091$ \\
HQQ, $g=512$             & $4.06$ & $0.095$ & $0.068$ & $0.050$ & $0.096$ & $2.06$ & $8.223$ & $6.731$ & $8.215$ & $12.294$ \\
\midrule
\AQuant{}, scalar        & $4.03$ & $0.074$ & $0.040$ & $0.037$ & $0.049$ & $2.03$ & $7.188$ & $1.160$ & $4.392$ & $1.334$ \\
\AQuant{}, $E_8$P        & $4.06$ & $0.062$ & $0.035$ & $0.030$ & $0.043$ & $2.03$ & $5.756$ & $0.693$ & $2.836$ & $0.681$ \\
\AQuant{}, trellis       & $4.06$ & $0.040$ & $\mathbf{0.022}$ & $\mathbf{0.019}$ & $\mathbf{0.027}$ & $2.03$ & $\mathbf{3.201}$ & $\mathbf{0.447}$ & $\mathbf{1.903}$ & $\mathbf{0.477}$ \\
\bottomrule
\end{tabular}
\end{table}

\Cref{tab:llm-full} compares \AQuant{} against five published baselines at comparable bit use.
The complexity ladder of \cref{tab:stages} is preserved throughout: trading the scalar stage for the trellis
is worth $1.8$ to $1.9\times$ in KL at four bits and $2.2$ to $2.8\times$ at two. The same ordering holds end to end on Gemma-3-12B (\cref{app:stageorder}). \AQuant{}, trellis performs best in the 2-bit class. In the 4-bit class, SINQ at g=64 matches it on Mistral and surpasses it on TinyLlama, albeit at half a bit more budget. At comparable bit budget, \AQuant{}, trellis outperforms all baselines. Classification accuracy on standard benchmarks confirms this ordering (\cref{tab:zeroshot}).

\subsection{The \DecayLaw{} holds on real weights}
\label{sec:decaymeasured}

The prediction of \cref{prop:decay}, derived without access to any model weights, agrees with the empirical values on pretrained weights to at least three decimal places at every pass.
Over eight Clay v1.5 layers at $d=512$ the measured relative error runs $0.3419$,
$0.1170$, $0.0400$, $0.0137$ across one to four passes at $b=2$ against a predicted
$0.3422^{P}$. At $b=4$ it runs $0.0973$, $0.0095$, $0.0009$, $0.0001$ against $0.0972^{P}$
(\cref{fig:overview}B).
The per-pass ratios remain constant at $0.342$ and
$0.097$, consistent with exact geometric decay rather than an empirically fitted trend.

In case the rotation is fixed across passes, the ratios climb $0.418$, $0.597$, $0.734$ towards one, thus the decay stalls once the residual is
no longer returned to $f_d$. Replacing the Hadamard transform with a dense random-orthogonal rotation, for which
\cref{prop:reiso} is exact, moves the error by at most two parts in a thousand at every pass
count (\cref{app:extra}). Perplexity shows the same order but by far smaller
margins, which establishes the importance of KL as the primary metric (\cref{app:llmtable}).

\subsection{Elasticity in operation}
\label{sec:elasticmeasured}

Next, we show that elasticity comes at no cost to performance. Reading the first $P'$
passes of a $P$-pass payload is bit-identical to quantizing with $P'$, checked by byte
comparison on three published tiered models, thus raising the precision is an append operation and not a
requantization. On Mistral-7B with $32$ blocks assigned to two precisions in a $(3,3,2,2)$ configuration, our fused kernels decode a mixed-precision batch in $13.95$\,ms versus $21.76$\,ms when each precision is served from an independent artifact ($1.56\times$ faster), and require $6.09$ versus $10.16$ bits per weight ($1.67\times$ less storage).
\Cref{app:serving} reports five serving scenarios, including the memory-bound regime where nesting provides no latency gain: inside that window every request costs the same regardless of precision, so the latency advantage requires a deep batch; the storage reduction holds unconditionally.

\subsection{Beyond language}
\label{sec:beyond}

Our construction is domain-agnostic as it only requires the weight metrics.
\Cref{tab:beyond} demonstrates this generality on foundation models from Earth observation and pathology. The tiers behave as predicted by the theory. Both encoders' KL falls by roughly an
order of magnitude per tier, $0.641$ to $0.033$ to $0.003$ on UNI and $0.224$ to $0.010$ to
$0.002$ on Clay. The geometric
decay of \cref{prop:decay} is thus observable in a downstream model-preservation metric, not
solely in the weight-space $\rho$.

\begin{table}[t]
\centering
\caption{Results beyond language modeling, evaluated at the three precision tiers ($2.06$, $4.13$ and $6.19$ bits).
KL divergence is against the same model at full precision.
The Clay probe carries a standard error of $0.43$ points, so
only KL divergence separates its tiers. The KL protocol, the sampling procedure and each model's quantized fraction are available in
\cref{app:nonllm}.}
\label{tab:beyond}
\scriptsize
\setlength{\tabcolsep}{4pt}
\begin{tabular}{@{}llr|rrr|rrr@{}}
\toprule
& & & \multicolumn{3}{c|}{task metric ($\uparrow$)} & \multicolumn{3}{c}{KL ($\downarrow$)} \\
Model & Task & full & $P{=}1$ & $P{=}2$ & $P{=}3$ & $P{=}1$ & $P{=}2$ & $P{=}3$ \\
\midrule
Clay v1.5 & EuroSAT      & $96.30$ & $96.00$ & $96.15$ & $96.35$ & $0.224$ & $0.010$ & $0.002$ \\
UNI       & PatchCamelyon& $87.81$ & $85.34$ & $86.68$ & $87.56$ & $0.641$ & $0.033$ & $0.003$ \\
\bottomrule
\end{tabular}
\end{table}

\section{Limitations}
\label{sec:limitations}

Calibration data allows a quantizer to learn which weight perturbations are costly to the model's output and steer rounding error away from them. \AQuant{} treats all directions equally, and this is its dominant cost: on LLaMA2-7B a calibrated trellis (QTIP) reaches $5.17$ perplexity versus our $5.61$ at four bits and $5.86$ versus $71.80$ at two (\cref{app:elastic}). \Cref{prop:reiso} is proved for Haar rotations; the deployed Hadamard transform does not satisfy its hypotheses, and the empirical gap of under two parts per thousand (\cref{app:extra}) is a measurement, not a bound. The largest model we evaluate end to end is 7B parameters; the decay law has not been verified on a full forward pass at frontier scale.

\section{Conclusion}
\label{sec:conclusion}

In this paper we introduce \AQuant{}, a calibration-free quantizier that recursively re-quantizes the residual. We establish that a fresh Haar rotation returns each residual to the same spherical marginal, so one codebook stays optimal at every pass, the error curve is available in the closed form  before a weight is read, and every prefix is a valid lower-rate artifact. A model can therefore
be quantized the moment its weights exist and the quantization is free from any calibration set bias. In addition, the quantization is near-instant with the scalar, and E8 stages, and we introduce a fast Triton Viterbi encoder quantizes the models up to 68 times faster than the baseline. The resulting models outperform all calibration-free baselines at matched bit-budgets and substantially narrow the gap to their calibrated counterparts.

\subsection*{AI use statement}

In this work, we used generative AI tools (such as Claude Opus 4.6, Claude Opus 5, Claude Fable 5 and others) for code development (Python/Triton kernels and Lean~4 formalization), drafting of parts of some sections based on author-provided structure, editing of text, and discovery of recent literature in the quantization domain. Additionally, we used generative AI tools for assistance with figure layout and styling. All proofs were done by hand originally but refined and verified with lean4 and LLMs in the loop. All generated code, and textual edits were manually reviewed by the authors.  We take responsibility for the final content of this work, including text, claims or artifacts produced with the aid of generative AI.

\bibliography{references}
\bibliographystyle{references}

\appendix

\section{Proofs}
\label{app:proofs}

\paragraph{Notation.} We reintroduce the symbols here for clarity.
$\rho^{2}$ is the \emph{stage} distortion of
\cref{prop:decay}, $\E_u\norm{u-q(u)}^{2}$ for $u$ uniform on $\Sph^{d-1}$: a property of the
codebook alone, computed by quadrature before any weight exists, and tabulated in
\cref{tab:stages}. $\rho$ is defined as its positive square root. The per-pass ratio
measured on real weights is an estimate of that same $\rho^{2}$.
Finally $\mathrm{err}(P)$ is a
\emph{relative root-mean-square} error, $\sqrt{\E\norm{r_P}^{2}}/\norm{w}$, which is why
$\mathrm{err}(P)=\rho^{P}$ and \cref{prop:decay}'s
$\E\norm{r_{P}}^{2}=\rho^{2P}\,\E\norm{w}^{2}$ are the same statement. Bit-widths follow the
same split: $b$ is the stage's bits per coordinate, $Pb$ the payload, and the bits per weight
quoted in every table are \emph{measured} storage including side information, so they exceed
$Pb$ by the scale overhead.

\begin{proof}[Proof of Proposition~\ref{prop:reiso}]
We condition on $r$. For fixed $r\neq 0$ and any $Q\in O(d)$, Haar measure is
invariant under left multiplication, so $QR$ has the same law (probability distribution)  as $R$ and hence
$Q\,(Rr/\norm{Rr}) = (QR)r/\norm{(QR)r}$ has the same law as $Rr/\norm{Rr}$. The
conditional law of $u=Rr/\norm{Rr}$ on $\Sph^{d-1}$ is therefore invariant under
the action of $O(d)$, and the unique such probability measure is the uniform one
(since any distribution on the sphere that is unchanged by every rotation cannot
favor any region over another of equal area).
The conditional law does not depend on $r$, so it is also the unconditional law
and $u$ is independent of $r$, in particular of $\norm{r}$, using $\norm{Rr} =
\norm{r}$. The coordinate marginal of the uniform distribution on $\Sph^{d-1}$
is $f_d$ by Lemma~\ref{lem:marginal}.
\end{proof}

\begin{proof}[Proof of Proposition~\ref{prop:decay}]
Fix $p$ and abbreviate $s=s_p$, $u=u_p$, $R=R_p$. Since $R$ is an isometry
($\norm{Rx}=\norm{x}$ for all $x$, so multiplying by $R$ inside a norm does not
change it),
\[
\norm{r_{p+1}}^{2}
= \norm{r_p - s\,R^{\top}q(u)}^{2}
= \norm{R r_p - s\,q(u)}^{2}
= s^{2}\,\norm{u - q(u)}^{2},
\]
since $Rr_p = s\,u$ by definition of $u_p$. By Proposition~\ref{prop:reiso} the direction $u$ is uniform on
$\Sph^{d-1}$ and independent of $s=\norm{r_p}$, so
\[
\E\bigl[\norm{r_{p+1}}^{2}\bigr]
= \E[s^{2}]\;\E\bigl[\norm{u-q(u)}^{2}\bigr]
= \E\bigl[\norm{r_{p}}^{2}\bigr]\; d\,\E_{t\sim f_d}\bigl[(t-q(t))^{2}\bigr],
\]
where the last equality decomposes $\norm{u-q(u)}^{2}$ into $d$ coordinate terms, each
with marginal $f_d$, and using $\E_{t\sim f_d}[t^{2}] = 1/d$. The bracketed
factor equals $\rho^{2}$ as defined in \eqref{eq:rho} and does not depend on $p$. Induction from
$r_0=w$ gives $\E\norm{r_P}^{2} = \rho^{2P}\norm{w}^{2}$, hence the stated form.
Only the isometry of $R$, the independence of $u$ from $s$ and the constancy of
the law of $u$ in $p$ were used; the coordinate structure of $q$ was not, so
replacing the quadrature by $\rho^{2}=\E_u\norm{u-q(u)}^{2}$ carries the
statement to a lattice or trellis stage verbatim.
\end{proof}

\begin{lemma}[Block marginal]\label{lem:blockmarginal}
    Let $u$ be uniform on $\Sph^{d-1}$ and let $x\in\R^{B}$ collect any $B<d$ of its
    coordinates. Then $x$ has density
    \begin{equation}
    \label{eq:blockmarginal}
        f_{d,B}(x) \;=\;
        \frac{\Gamma(d/2)}{\pi^{B/2}\,\Gamma\!\bigl((d-B)/2\bigr)}
        \,\bigl(1-\norm{x}^{2}\bigr)^{\frac{d-B-2}{2}}
    \end{equation}
    on the unit ball of $\R^{B}$, and $B=1$ recovers \cref{lem:marginal}.
\end{lemma}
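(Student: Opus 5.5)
We derive the block marginal from the Gaussian representation of the uniform law on the sphere. Write $u=g/\norm{g}$ with $g\sim\mathcal{N}(0,I_d)$. By permutation invariance of the uniform measure (Proposition~\ref{prop:reiso} with $R$ a coordinate permutation, or simply the exchangeability of $g$), we may assume without loss of generality that the $B$ selected coordinates are the first $B$. Split $g=(y,z)$ with $y\in\R^{B}$ and $z\in\R^{d-B}$ independent, so that $x=y/\sqrt{\norm{y}^{2}+\norm{z}^{2}}$.

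\emph{Step 1: reduce to radial and angular parts.} The map $y\mapsto(\norm{y},y/\norm{y})$ splits $y$ into independent pieces, with $y/\norm{y}$ uniform on $\Sph^{B-1}$ and independent of $(\norm{y},\norm{z})$. Hence $x$ is rotationally invariant in $\R^{B}$, and it suffices to find the law of $T=\norm{x}^{2}=\norm{y}^{2}/(\norm{y}^{2}+\norm{z}^{2})$. Since $\norm{y}^{2}\sim\chi^{2}_{B}$ and $\norm{z}^{2}\sim\chi^{2}_{d-B}$ are independent Gamma variables with a common scale, the standard Gamma--Beta relation gives $T\sim\mathrm{Beta}(B/2,(d-B)/2)$. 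This is the one place a known fact is used, and I would cite it rather than re-derive it.

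\emph{Step 2: change variables from $(T,\text{direction})$ back to $x$.} A rotation-invariant density on the unit ball of $\R^{B}$ of the form $h(\norm{x}^{2})$ satisfies
\[
\Pr(T\in dt)=h(t)\,|\Sph^{B-1}|\,\tfrac12 t^{B/2-1}\,dt,
\]
where $|\Sph^{B-1}|=2\pi^{B/2}/\Gamma(B/2)$. Comparing with the Beta density $t^{B/2-1}(1-t)^{(d-B)/2-1}/\mathrm{B}(B/2,(d-B)/2)$, the powers $t^{B/2-1}$ cancel, leaving $h(t)\propto(1-t)^{(d-B-2)/2}$. The constant is
\[
\frac{\Gamma(B/2)}{\pi^{B/2}\,\mathrm{B}(B/2,(d-B)/2)}=\frac{\Gamma(d/2)}{\pi^{B/2}\Gamma((d-B)/2)},
\]
which matches \eqref{eq:blockmarginal}.

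\emph{Step 3: check the case $B=1$.} Setting $B=1$ gives $\Gamma(d/2)/(\sqrt\pi\,\Gamma((d-1)/2))\,(1-t^{2})^{(d-3)/2}$, which recovers Lemma~\ref{lem:marginal}.

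The main obstacle is only bookkeeping: getting the Jacobian factor $\tfrac12 t^{B/2-1}$ in polar coordinates and the sphere's surface area right, and noting that $B<d$ is needed so that $z$ is nondegenerate and the Beta law is defined. The case $B=d-1$ is allowed; there the exponent is $-1/2$ and the density is still integrable. As a cross-check I would integrate \eqref{eq:blockmarginal} over the ball to confirm it has total mass one, and also derive it directly by integrating out $d-B$ coordinates of the surface measure, using the factor $(1-\norm{x}^{2})^{(d-B)/2-1}$ that comes from the sphere of radius $\sqrt{1-\norm{x}^{2}}$ in the remaining dimensions.
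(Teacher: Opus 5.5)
Your proof is correct, and it takes a different route from the paper. The paper in fact gives no written proof. It states \cref{lem:blockmarginal} and defers to the Lean development, where the result is a distributional statement about Mathlib's normalised surface measure on $\Sph^{d-1}$: its image under the block projection is shown to have density $f_{d,B}$, and the constant is checked separately (\texttt{sphereUniform\_\allowbreak map\_\allowbreak block}, \texttt{blockLaw\_\allowbreak eq\_\allowbreak withDensity}).

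You instead go through $u=g/\norm{g}$ and the Gamma--Beta relation. This buys transparency and a useful by-product:
\begin{itemize}
\item the exponent is the Beta exponent $(d-B)/2-1$ left over once the polar Jacobian $t^{B/2-1}$ cancels;
\item the constant is the Beta normaliser divided by $|\Sph^{B-1}|$;
\item $\norm{x}^{2}\sim\mathrm{Beta}(B/2,(d-B)/2)$ gives $\E\norm{x}^{2}=B/d$ at once, which is exactly the identity behind the second equality in \eqref{eq:rho}.
\end{itemize}
The cost is two imported facts. The first is that $g/\norm{g}$ is uniform, usually derived from the $O(d)$-invariance of $\mathcal{N}(0,I_d)$ plus uniqueness of the invariant law on the sphere (the same fact used in the proof of \cref{prop:reiso}). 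The second is the Gamma--Beta identity. A computation on the sphere itself, like your proposed cross-check, avoids both and exhibits the constant as the area ratio $|\Sph^{d-B-1}|/|\Sph^{d-1}|$.

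Three small corrections.
\begin{itemize}
\item In Step 2, rotation invariance alone does not give a density: the uniform law on a sphere in $\R^{B}$ is invariant but singular. Run the change of variables forward instead. Use $x=\sqrt{T}\,\theta$ with $\theta$ uniform on $\Sph^{B-1}$ and independent of $T$. Lebesgue measure on $\R^{B}\setminus\{0\}$ is the image of $\tfrac12 t^{B/2-1}\,dt\otimes\sigma_{B-1}$ under $(t,\theta)\mapsto\sqrt{t}\,\theta$, where $\sigma_{B-1}$ is unnormalised surface measure. This yields the density $p_T(\norm{x}^{2})\big/\bigl(\tfrac12|\Sph^{B-1}|\norm{x}^{B-2}\bigr)$ directly.
\item \Cref{prop:reiso} does not apply to a fixed permutation, since it requires $R$ to be Haar. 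The reduction to the first $B$ coordinates rests on the invariance of the uniform law, or on the exchangeability of $g$, as you also say.
\item In the cross-check, the sphere of radius $\sqrt{1-\norm{x}^{2}}$ in $\R^{d-B}$ contributes only $(1-\norm{x}^{2})^{(d-B-1)/2}$. The missing factor $(1-\norm{x}^{2})^{-1/2}$ is the inverse of the normal Jacobian of the projection of $\Sph^{d-1}$ onto the $x$-coordinates, which is $\sqrt{\det(I_B-xx^{\top})}=\sqrt{1-\norm{x}^{2}}$.
\end{itemize}
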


This is the law the $E_8$ and trellis stages integrate against in \eqref{eq:rho}, so the
quadrature that fixes $\rho^{2}$ is available in closed form at every block size and not
only in the scalar case. Both the density and its normalising constant are machine-checked.

\paragraph{What the Lean~4 development machine-checks.}
\label{lean:optimal}
The development comprises $154$ public theorems against Mathlib. It introduces no
\texttt{axiom} of its own and contains no \texttt{sorry}: \texttt{\#print axioms} on every
public result returns \texttt{propext}, \texttt{Classical.choice} and \texttt{Quot.sound}
and nothing else. Each result below is either proved outright or carries its hypotheses in
its own signature.

\Cref{prop:reiso} is checked at the concrete setting of this paper, not only in the abstract.
$O(d)$ is built as \texttt{Matrix.orthogonalGroup}, shown compact, equipped with its
normalised Haar measure, shown right-invariant, acting on $\R^{d}$ by isometries and
transitively on the unit sphere; \texttt{Orthogonal.orbitLaw\_\allowbreak haar} is then
\cref{prop:reiso} in full. The uniqueness step the informal proof invokes is proved
separately: the uniform law is the \emph{only} rotation-invariant probability measure
carried by the sphere (\texttt{Orthogonal.eq\_\allowbreak sphereUniformAmbient\_\allowbreak of\_\allowbreak invariant}).
\Cref{lem:marginal} is proved as a distributional statement about Mathlib's normalised
surface measure, together with its block form \eqref{eq:blockmarginal} and that form's
constant (\texttt{sphereUniform\_\allowbreak map\_\allowbreak block}, \texttt{blockLaw\_\allowbreak eq\_\allowbreak withDensity}).

\Cref{cor:optimal} is checked in the class-indexed form of \cref{sec:theory}, for an
arbitrary class of codebooks in an arbitrary real normed measurable space: rounding to the
class-optimal codebook beats every stage taking values in that class
(\texttt{stageDistortion\_\allowbreak le\_\allowbreak of\_\allowbreak codebook\_\allowbreak optimal}), and the statement is carried through
to the residual after $P$ passes
(\texttt{integral\_\allowbreak norm\_\allowbreak resid\_\allowbreak sq\_\allowbreak le\_\allowbreak of\_\allowbreak codebook\_\allowbreak optimal}). That per-block nearest
rounding is nearest in the product codebook is checked too
(\texttt{isNearest\_\allowbreak blockStage}), which is what makes coordinate-wise Lloyd--Max and
per-block E8P instances rather than analogies.

\Cref{prop:decay} is not assumed isotropic per pass; the isotropy is derived. Its
hypotheses are bundled, and every field of that bundle is discharged from hypotheses that
speak only about the algorithm's inputs, namely that the rotations are mutually independent
of each other and of the weights, each drawn from one right-invariant law, that the acting
group is transitive on the unit sphere, and that no pass is exact
(\texttt{isotropicPasses\_\allowbreak of\_\allowbreak rotationStream}). Two consequences follow that a conditional
form could not state. The residual after $p$ passes is a measurable function of the weight
row and the earlier rotations, so the fresh rotation is independent of everything the
residual depends on, which is the induction the prose argues. And $\rho$ ceases to be a
free parameter: it is \emph{identified} as the stage distortion of the codebook against the
direction law, which is what ties \cref{prop:decay}'s constant to the stage-agnostic
quantity of \cref{tab:stages}. The hypotheses are then shown satisfiable at $O(d)$ for
\emph{every} $d\ge1$
(\texttt{exists\_\allowbreak rotationStream\_\allowbreak Od}, \texttt{exists\_\allowbreak witness\_\allowbreak decay\_\allowbreak Od}), and
\texttt{integral\_\allowbreak norm\_\allowbreak resid\_\allowbreak sq\_\allowbreak paper} is precisely \cref{prop:decay} as stated in \cref{sec:theory}:
Haar rotations on $O(d)$, a scalar codebook applied coordinate-wise, and $\rho^{2}$ the
quadrature of \eqref{eq:rho}.

\section{Haar in theory, Hadamard in practice}
\label{app:extra}
\begin{figure}[t]
\centering
\includegraphics[width=\linewidth]{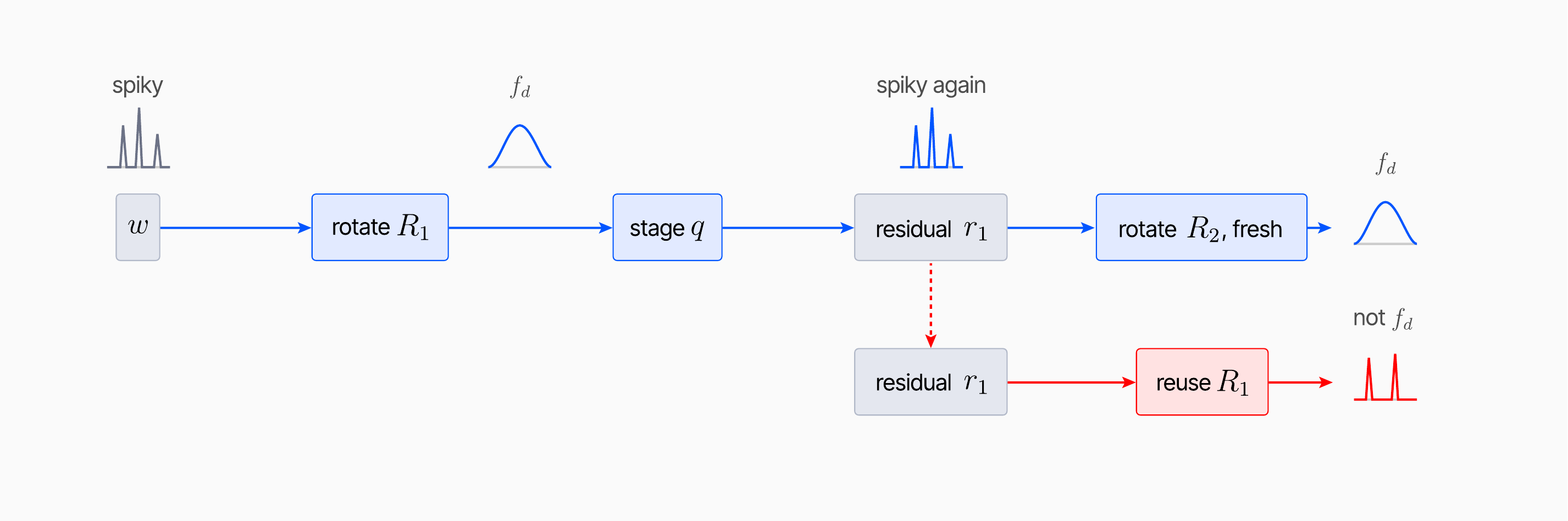}
\caption{Why the rotation has to be fresh? A stage leaves a residual that is structured in the
frame it was just quantized in, so the marginal the codebook was designed for is destroyed by
the very act of using it. Drawing a fresh rotation restores that marginal exactly
(\cref{prop:reiso}), which is what lets the same codebook be optimal at pass two as at pass
one, so nothing is refitted (\cref{cor:optimal}) and the error is $\rho^{P}$
(\cref{prop:decay}). Reusing the previous rotation does not: the residual is already adapted
to it, the coordinates it produces are not $f_d$, and the geometric decay stalls, its measured
per-pass ratios climbing $0.418$, $0.597$, $0.734$ towards one (\cref{fig:overview}B).
Freshness, not orthogonality, is the active ingredient. The coordinate laws are drawn
schematically.}
\label{fig:reiso}
\end{figure}

\Cref{prop:reiso} is exact for Haar rotations. The deployed rotation is
$R=H\,\mathrm{diag}(\varepsilon)$ with $H$ flat and orthogonal and $\varepsilon$ i.i.d.\
signs, a finite subset of $O(d)$ of size $2^{d}$, so the orbit of any point under it is
finite and the hypotheses of \cref{prop:reiso} are not satisfied. The resulting discrepancy
is direction-dependent and vanishes after a single pass on real weights (\cref{fig:reiso}). We quantify the end-to-end cost of this
substitution below. Note that the difference here is limited to weights and may not hold for embeddings and activations.

\paragraph{Substituting a true Haar rotation.}
To isolate the effect of the discrete rotation we replace it with a dense
random-orthogonal matrix, which satisfies \cref{prop:reiso} exactly, and compare the
reconstruction error end to end. On eight Clay v1.5 encoder layers, at $d=512$ and $b=2$
with a fresh rotation per pass:

\begin{center}
\small
\begin{tabular}{@{}rrrr@{}}
\toprule
$P$ & Hadamard & dense random-orthogonal & relative gap \\
\midrule
$1$ & $0.3419$ & $0.3424$ & $1.4\times10^{-3}$ \\
$2$ & $0.1170$ & $0.1173$ & $2.0\times10^{-3}$ \\
$3$ & $0.0401$ & $0.0401$ & $1.8\times10^{-3}$ \\
$4$ & $0.0137$ & $0.0137$ & $8.5\times10^{-4}$ \\
\bottomrule
\end{tabular}
\end{center}

The substitution changes the error by at most two parts in a thousand, and the gap does not
grow with the pass count. The same measurement on Phi-1.5's decoder linears at both
deployed group sizes confirms that the conclusion is not specific to the vision encoder:

\begin{center}
\small
\begin{tabular}{@{}rrrrrrr@{}}
\toprule
& \multicolumn{3}{c}{$d=512$} & \multicolumn{3}{c}{$d=256$} \\
\cmidrule(lr){2-4}\cmidrule(lr){5-7}
$P$ & Hadamard & dense r.o. & gap & Hadamard & dense r.o. & gap \\
\midrule
$1$ & $0.3419$ & $0.3422$ & $9.2\times10^{-4}$ & $0.3412$ & $0.3417$ & $1.4\times10^{-3}$ \\
$2$ & $0.1170$ & $0.1171$ & $1.3\times10^{-3}$ & $0.1166$ & $0.1167$ & $1.5\times10^{-3}$ \\
$3$ & $0.0400$ & $0.0401$ & $1.4\times10^{-3}$ & $0.0398$ & $0.0399$ & $1.7\times10^{-3}$ \\
$4$ & $0.0137$ & $0.0137$ & $1.2\times10^{-3}$ & $0.0136$ & $0.0136$ & $2.0\times10^{-3}$ \\
\bottomrule
\end{tabular}
\end{center}

The substitution gap stays at or below two parts in a thousand at every pass count and at
both group sizes. At $d=256$ it grows mildly with $P$, from $1.4$ to $2.0$ parts in a
thousand, consistent with a smaller group affording less averaging, but well below the
level at which the quantized model would be affected. The deployed Hadamard column is
itself unchanged across the two model families, $0.3419$, $0.1170$, $0.0400$ and $0.0137$
on Phi-1.5 against Clay's $0.3419$, $0.1170$, $0.0401$ and $0.0137$, agreeing to every
digit but the fourth of the third pass, confirming that \cref{prop:decay}'s constant is
independent of the weight distribution to the precision we measure
(\texttt{experiments/haar\_substitution\_llm.py}).

\paragraph{Deployment consideration.}
Each pass draws a fresh sign pattern, as the recursion requires, but that pattern is shared
across all groups within the pass rather than drawn independently per group. This sharing
preserves the unbiasedness of the per-pass rate but reduces concentration at the layer
level; the model-level average nevertheless tracks $\rho^{P}$ to three decimals
(\cref{fig:overview}B).

\section{Kernel implementation}
\label{app:kernels}

\begin{figure}[t]
\centering
\includegraphics[width=0.85\linewidth]{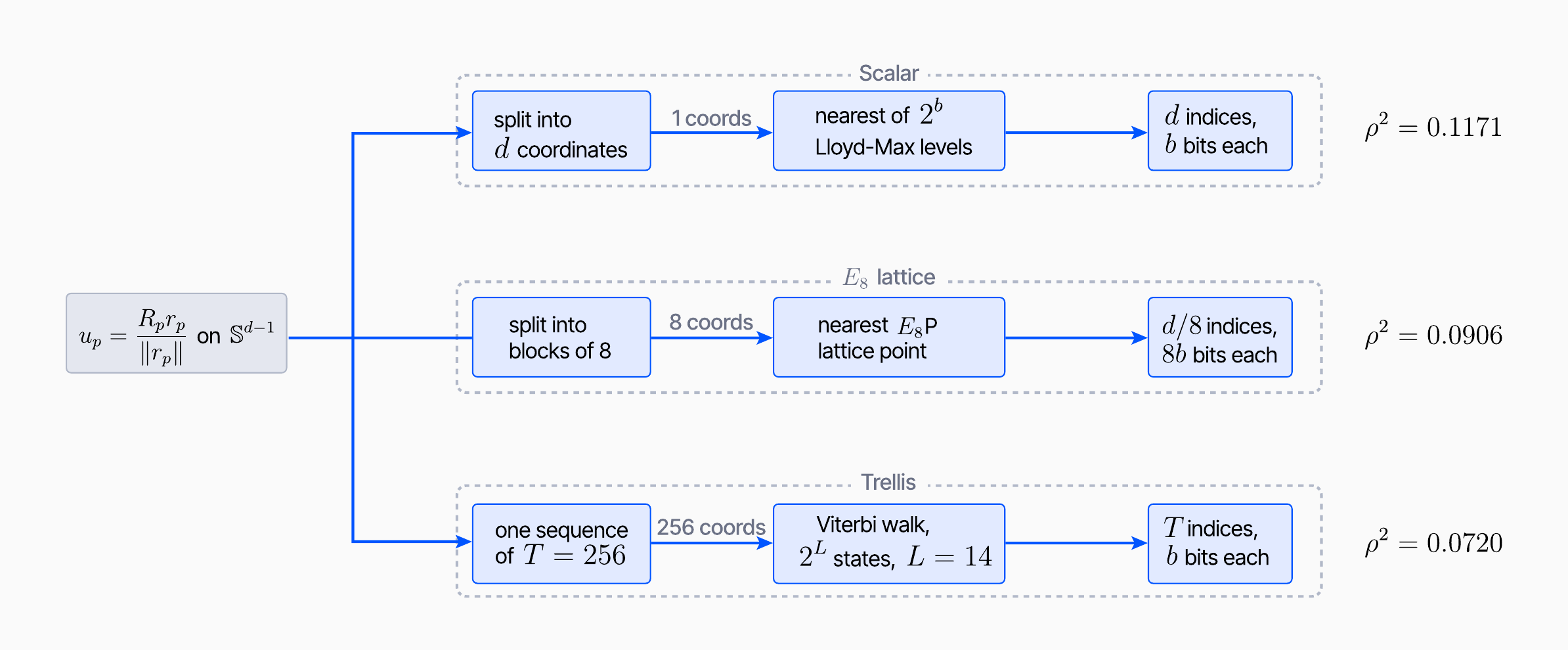}
\caption{The three data-free stages as one dataflow. Every lane takes the same input, the
rotated residual normalized onto the sphere, and every lane emits the same thing, $b$ bits per
weight plus one fp16 scale for the group, so a stage can be swapped for another without
changing the format, the rotation or anything downstream. The lanes differ in exactly one
place, the middle box: how many coordinates are quantized \emph{jointly}. That is the only
axis along which they differ and it is what moves $\rho^{2}$, from $0.1171$ when coordinates
are rounded one at a time to $0.0720$ when $256$ are chosen together by a Viterbi search. The
constants are those of \cref{tab:stages} and are computed from the marginal before any weight
is read; \cref{app:kernels} reports what the trellis lane costs to run.}
\label{fig:stages}
\end{figure}

\paragraph{The Viterbi encoder.}
We contribute a Triton kernel for the Viterbi encoder used by the trellis stage (\cref{fig:stages}). Encoding
is the one step of the pipeline whose cost depends on the weights rather than on the format,
and it is therefore the step that most benefits from acceleration. Trellis coded quantization
represents $T$ coordinates as a walk on a $2^{L}$-state graph, and the minimum-distortion
walk is found by the Viterbi algorithm: at each time step every state selects its best
incoming path by the add-compare-select operation, and the selection is recorded for
traceback at the end. The recursion is serial in time but fully parallel across states, a
structure that maps naturally to a GPU.

The kernel exploits this parallelism by keeping the $2^{L}$ path costs for the
current time step in registers rather than in DRAM. Each Triton program processes a chunk of
weight rows, and within each program the add-compare-select sweep reduces to a register
permutation and a pairwise minimum, avoiding any round trip to global memory. Traceback is
handled by a separate kernel; because each step is a single dependent load, its latency is
hidden by maintaining many rows in flight within one program rather than by launching many
programs. Traceback accounts for approximately four percent of encoding time.

The register-resident design imposes two constraints. First, the path-cost array must fit in
the register file, which limits the trellis width to $2^{L}\le 2^{14}$ in this
implementation. At $L=16$ the array spills to DRAM, the register-resident kernel cannot be
used, and the fallback reference implementation runs at $2.5\times$ rather than $68\times$;
this is why \cref{tab:stages} reports $L=14$ as the shipped width. Second, Triton requires
tensor shapes to be known at compile time, so the state count is fixed as a compile-time
constant and every stage of the sweep operates on identically shaped tensors.

\paragraph{Benchmark protocol.}
To isolate the kernel's contribution we compare it against a reference implementation of the
same Viterbi recursion that stores the cost vector in DRAM.
\Cref{tab:kernel} reports throughput at $d=512$, $b=2$, over one chunk of $1024$ rows on a
single RTX 4090 ($256$ rows at $L=16$, where the register budget is exhausted). Both arms
encode the same weights and are compared on the distortion they achieve rather than on symbol
agreement, since distinct walks of equal path metric are equally valid: add-compare-select
ties are broken differently by the two implementations, producing approximately one differing
symbol per $3000$ at $L=12$ while $\rho^{2}$ agrees to six decimal places.

\begin{table}[h]
\centering
\caption{Viterbi encoder throughput in millions of weights per second, $d=512$, $b=2$,
one RTX 4090. At $L=16$ the path-cost array exceeds the register budget, so the reference
implementation runs and the speedup collapses. Bold marks the shipped configuration rather
than a column best: $L=16$ reaches a lower $\rho^{2}$ but cannot use the register-resident
kernel. $L=14$ is the width \cref{tab:stages} reports and the encoder ships; the tiered
artifacts of \cref{tab:kl} predate the kernel and use $L=12$, which is why that table
carries both. $\rho^{2}$ is quoted to six places here and rounded to four in
\cref{tab:stages}.}
\label{tab:kernel}
\small
\begin{tabular}{@{}rrrrrr@{}}
\toprule
$L$ & states & reference & kernel & speedup & $\rho^{2}$ \\
\midrule
$8$  & $256$    & $16.30$ & $532.06$ & $32.7\times$ & $0.090871$ \\
$10$ & $1{,}024$  & $16.22$ & $304.78$ & $18.8\times$ & $0.080997$ \\
$12$ & $4{,}096$  & $13.38$ & $587.35$ & $43.9\times$ & $0.075239$ \\
$14$ & $16{,}384$ & $2.90$  & $197.61$ & $\mathbf{68.1\times}$ & $\mathbf{0.071999}$ \\
$16$ & $65{,}536$ & $0.72$  & $1.83$   & $2.5\times$  & $0.070240$ \\
\bottomrule
\end{tabular}
\end{table}

\section{Language-model baselines: extended results}
\label{app:llmtable}

\Cref{tab:llmfull-appendix,tab:llmfull-7b} extend \cref{tab:llm-full} with perplexity
and the $99.9$th KL percentile. KL discriminates more sharply than perplexity: at four
bits the arms span $8\%$ in perplexity but a factor of $3.5$ in KL, and on our three
stages the gap is $1$--$3\%$ against $83$--$90\%$. The tail percentile agrees with the
mean KL on the leading two arms in seven of eight columns.

\begin{table}[h]
\centering
\caption{Extended results for the two models near $1$B, adding perplexity and the
$99.9$th-percentile KL to \cref{tab:llm-full}'s mean KL. Bits per weight is measured per
arm; SINQ's bit-width is an upper bound because its dual-scale overhead depends on the
layer shape.}
\label{tab:llmfull-appendix}
\small
\setlength{\tabcolsep}{4pt}
\begin{tabular}{@{}lr|rrr|rrr@{}}
\toprule
& & \multicolumn{3}{c|}{TinyLlama-1.1B} & \multicolumn{3}{c}{Phi-1.5} \\
Method & bits/w & PPL & KL & KL$_{99.9}$ & PPL & KL & KL$_{99.9}$ \\
\midrule
\multicolumn{8}{l}{\emph{$4$-bit class}} \\
QuaRot-RTN & $4.03$ & $8.75$ & $0.116$ & $2.51$ & $22.75$ & $0.059$ & $0.66$ \\
SINQ, $g=64$ & $\leq\!4.53$ & $8.09$ & $\mathbf{0.035}$ & $0.79$ & $22.30$ & $0.028$ & $0.37$ \\
SINQ, $g=512$ & $4.09$ & $8.41$ & $0.069$ & $1.60$ & $22.46$ & $0.050$ & $0.60$ \\
HQQ, $g=64$ & $4.50$ & $8.18$ & $0.046$ & $1.47$ & $22.35$ & $0.033$ & $0.41$ \\
HQQ, $g=512$ & $4.06$ & $8.63$ & $0.095$ & $2.28$ & $22.88$ & $0.068$ & $0.87$ \\
\AQuant{}, scalar & $4.03$ & $8.38$ & $0.074$ & $1.86$ & $22.45$ & $0.040$ & $0.47$ \\
\AQuant{}, $E_8$P & $4.06$ & $8.32$ & $0.062$ & $2.02$ & $22.33$ & $0.035$ & $0.41$ \\
\AQuant{}, trellis & $4.06$ & $8.15$ & $0.040$ & $1.30$ & $22.23$ & $\mathbf{0.022}$ & $0.25$ \\
\AQuant{}, scalar, $P{=}2$ & $4.06$ & $8.82$ & $0.120$ & $3.06$ & $22.64$ & $0.062$ & $0.70$ \\
\AQuant{}, $l_p$ codebook & $4.03$ & $10.10$ & $0.261$ & $4.71$ & $23.51$ & $0.099$ & $1.17$ \\
\midrule
\multicolumn{8}{l}{\emph{$2$-bit class}} \\
QuaRot-RTN & $2.03$ & $2.8{\cdot}10^{4}$ & $8.144$ & $19.13$ & $233.79$ & $2.552$ & $11.14$ \\
SINQ, $g=64$ & $\leq\!2.53$ & $746.14$ & $4.537$ & $14.85$ & $78.34$ & $1.431$ & $8.46$ \\
SINQ, $g=512$ & $2.09$ & $2.2{\cdot}10^{5}$ & $10.230$ & $17.95$ & $1.5{\cdot}10^{3}$ & $4.476$ & $14.91$ \\
HQQ, $g=64$ & $2.50$ & $2.7{\cdot}10^{4}$ & $8.115$ & $17.72$ & $120.49$ & $1.912$ & $9.66$ \\
HQQ, $g=512$ & $2.06$ & $3.0{\cdot}10^{4}$ & $8.223$ & $17.30$ & $1.6{\cdot}10^{4}$ & $6.731$ & $19.26$ \\
\AQuant{}, scalar & $2.03$ & $1.1{\cdot}10^{4}$ & $7.188$ & $18.44$ & $61.52$ & $1.160$ & $8.05$ \\
\AQuant{}, $E_8$P & $2.03$ & $2.6{\cdot}10^{3}$ & $5.756$ & $17.95$ & $38.61$ & $0.693$ & $5.52$ \\
\AQuant{}, trellis & $2.03$ & $196.02$ & $\mathbf{3.201}$ & $13.07$ & $31.55$ & $\mathbf{0.447}$ & $4.20$ \\
\AQuant{}, $l_p$ codebook & $2.03$ & $1.9{\cdot}10^{4}$ & $7.729$ & $18.97$ & $592.86$ & $3.507$ & $14.34$ \\
\bottomrule
\end{tabular}
\end{table}

\begin{table}[h]
\centering
\caption{The same results at $7$B, scored by the protocol described below.}
\label{tab:llmfull-7b}
\small
\setlength{\tabcolsep}{4pt}
\begin{tabular}{@{}lr|rrr|rrr@{}}
\toprule
& & \multicolumn{3}{c|}{Mistral-7B-v0.1} & \multicolumn{3}{c}{Qwen2.5-7B} \\
Method & bits/w & PPL & KL & KL$_{99.9}$ & PPL & KL & KL$_{99.9}$ \\
\midrule
\multicolumn{8}{l}{\emph{$4$-bit class}} \\
QuaRot-RTN & $4.03$ & $5.65$ & $0.054$ & $1.99$ & $7.36$ & $0.072$ & $1.43$ \\
SINQ, $g=64$ & $4.51$ & $5.43$ & $\mathbf{0.019}$ & $0.53$ & $7.16$ & $0.039$ & $0.89$ \\
SINQ, $g=512$ & $4.07$ & $5.52$ & $0.040$ & $1.23$ & $7.51$ & $0.096$ & $2.26$ \\
HQQ, $g=64$ & $4.50$ & $5.46$ & $0.025$ & $0.79$ & $7.19$ & $0.044$ & $1.04$ \\
HQQ, $g=512$ & $4.06$ & $5.57$ & $0.050$ & $1.65$ & $7.48$ & $0.096$ & $1.69$ \\
\AQuant{}, scalar & $4.03$ & $5.51$ & $0.037$ & $1.06$ & $7.21$ & $0.049$ & $1.27$ \\
\AQuant{}, $E_8$P & $4.06$ & $5.47$ & $0.030$ & $0.84$ & $7.17$ & $0.043$ & $0.97$ \\
\AQuant{}, trellis & $4.06$ & $5.43$ & $\mathbf{0.019}$ & $0.57$ & $7.06$ & $\mathbf{0.027}$ & $0.57$ \\
\AQuant{}, scalar, $P{=}2$ & $4.06$ & $5.64$ & $0.057$ & $1.95$ & $7.33$ & $0.072$ & $1.42$ \\
\AQuant{}, $l_p$ codebook & $4.03$ & $6.14$ & $0.143$ & $3.56$ & $7.64$ & $0.112$ & $2.64$ \\
\midrule
\multicolumn{8}{l}{\emph{$2$-bit class}} \\
QuaRot-RTN & $2.03$ & $2.9{\cdot}10^{4}$ & $8.543$ & $16.45$ & $449.53$ & $4.217$ & $14.08$ \\
SINQ, $g=64$ & $2.51$ & $45.20$ & $2.107$ & $10.88$ & $3.1{\cdot}10^{3}$ & $6.152$ & $15.68$ \\
SINQ, $g=512$ & $2.07$ & $3.9{\cdot}10^{4}$ & $8.887$ & $15.60$ & $1.4{\cdot}10^{5}$ & $9.985$ & $19.99$ \\
HQQ, $g=64$ & $2.50$ & $3.2{\cdot}10^{3}$ & $6.361$ & $15.95$ & $402.06$ & $4.091$ & $14.92$ \\
HQQ, $g=512$ & $2.06$ & $2.0{\cdot}10^{4}$ & $8.215$ & $13.34$ & $1.4{\cdot}10^{6}$ & $12.294$ & $21.99$ \\
\AQuant{}, scalar & $2.03$ & $450.06$ & $4.392$ & $15.10$ & $24.81$ & $1.334$ & $9.71$ \\
\AQuant{}, $E_8$P & $2.03$ & $92.93$ & $2.836$ & $11.68$ & $13.01$ & $0.681$ & $8.58$ \\
\AQuant{}, trellis & $2.03$ & $36.60$ & $\mathbf{1.903}$ & $11.15$ & $10.64$ & $\mathbf{0.477}$ & $7.01$ \\
\AQuant{}, $l_p$ codebook & $2.03$ & $1.5{\cdot}10^{4}$ & $7.933$ & $20.57$ & $314.35$ & $3.852$ & $13.31$ \\
\bottomrule
\end{tabular}
\end{table}

\paragraph{Baselines.} RTN-absmax at $g{=}512$ collapses on both $1$B models (perplexity
above $600$ at four bits) and is omitted; QuaRot-RTN at the same group size serves as the
data-free baseline. The QuaRot-RTN row implements only the rotation, not the full
method's Hessian-guided rounding; the SINQ and HQQ rows use those authors'
implementations unmodified. HQQ stores a scale and zero point per group in fp16, raising
its $g{=}64$ arm to $4.50$ bits; SINQ stores a scale per group plus a row and column
vector, putting its $g{=}64$ arm at ${\leq}\,4.53$. \AQuant{} stores one fp16 scale per
group of $512$ per pass, yielding $4.03$ bits at one pass and $4.06$ at two; the scale is
rounded to fp16 before the residual is subtracted, so the rounding error enters the
residual and is quantized by the next pass. The $l_p$ codebook row replaces the
Lloyd--Max objective with HQQ's $p{=}0.7$ penalty; it is worse on all eight columns,
indicating that the group size rather than the objective accounts for HQQ's advantage at
$g{=}64$.

\paragraph{The per-pass rate loss, measured.}
A Gaussian source is successively refinable at zero cost under an optimal stage
\citep{equitz1991successive}. As shown in \cref{fig:rateloss}, our stages do not achieve the rate--distortion bound
$D(R)$; \cref{tab:stages} reports a per-pass rate loss $\Delta_{1}$ of $0.45$ bits for
the scalar, $0.27$ for $E_8$, and $0.10$ for the trellis, and \cref{prop:greedy} shows
that $P$ passes accumulate $P\Delta_{1}$. Two $2$-bit passes are therefore less efficient
than one $4$-bit pass at matched storage, and the gap shrinks as the stage improves. The
table confirms this: on TinyLlama two $2$-bit scalar passes reach KL $0.120$ where one
$4$-bit scalar pass reaches $0.074$, at $4.06$ against $4.03$ stored bits, and the KL
ratio is stable across models at $1.61$, $1.55$, $1.55$ and $1.46$ from TinyLlama to
Qwen. The $P{=}2$ artifact is kept not as a rate--distortion improvement but because it
contains its $2$-bit tier as a free prefix.

\paragraph{Scoring at $7$B.} The $7$B models exceed single-GPU memory for two resident
copies, so the reference's per-window log-probabilities are cached to disk in fp16 and
the quantized arm scored against the cache. On TinyLlama, where both models fit
simultaneously, the two paths agree to the last printed digit.

\paragraph{Classification accuracy.} \Cref{tab:zeroshot} supplements the perplexity and KL
metrics with classification accuracy on three standard benchmarks: ARC-Easy~\citep{clark2018arc}, WinoGrande~\citep{sakaguchi2021winogrande}
and HellaSwag~\citep{zellers2019hellaswag}, evaluating whether the KL ordering from \cref{tab:llm-full} carries over to
downstream tasks.

\begin{table}[h]
\centering
\caption{Classification accuracy (mean of ARC-Easy, WinoGrande, HellaSwag), scored by
length-normalized log-likelihood; chance is $33.3$. Binomial standard error per cell is
$0.58$--$0.69$ points. At four bits all gaps are within noise of full precision. Bold at two bits marks the best data-free arm. The last row reports
the trellis arm's above-chance margin as a percentage of full precision's.}
\label{tab:zeroshot}
\small
\setlength{\tabcolsep}{5pt}
\begin{tabular}{@{}lrrrr@{}}
\toprule
Method & TinyLlama & Phi-1.5 & Mistral-7B & Qwen2.5-7B \\
\midrule
\multicolumn{5}{l}{\emph{Reference}} \\
Full precision           & $55.7$ & $68.5$ & $76.7$ & $75.2$ \\
\midrule
\multicolumn{5}{l}{\emph{$4$-bit class}} \\
HQQ, $g=64$              & $54.9$ & $68.1$ & $75.9$ & $75.0$ \\
\AQuant{}, scalar        & $54.9$ & $68.4$ & $75.1$ & $74.1$ \\
\AQuant{}, trellis       & $55.2$ & $68.1$ & $76.0$ & $74.4$ \\
\midrule
\multicolumn{5}{l}{\emph{$2$-bit class}} \\
HQQ, $g=64$              & $33.7$ & $48.2$ & $33.6$ & $39.7$ \\
\AQuant{}, trellis       & $\mathbf{36.8}$ & $\mathbf{61.8}$ & $\mathbf{50.9}$ & $\mathbf{68.9}$ \\
\quad retained over chance & $15.6$ & $81.0$ & $40.5$ & $84.9$ \\
\bottomrule
\end{tabular}
\end{table}

At four bits all arms fall within $1.6$ points of full precision, confirming that four-bit quantization preserves downstream task quality. At two bits the trellis arm retains between $15.6\%$ and $84.9\%$ of full precision's above-chance margin depending on the model, consistent with the KL results in \cref{tab:llm-full}.

\section{Beyond language: evaluation protocol}
\label{app:nonllm}

This appendix details the evaluation protocol behind \cref{tab:beyond} in
\cref{sec:beyond}. Both Clay v1.5 and UNI are evaluated as frozen encoders with a linear
probe, following the standard protocol in their respective domains, so that the quantization
is the only variable across arms. Embeddings are z-scored and a multinomial logistic
regression is fitted by L-BFGS; every arm sees the same inputs and the same probe recipe.

Clay is evaluated on $10{,}000$ EuroSAT chips over ten classes in an $80/20$ train/test
split, read from the thirteen-band L1C product and reduced to the ten Sentinel-2 wavelengths
the model expects. UNI is evaluated on $8{,}000$ training and $8{,}000$ test PatchCamelyon
patches, resized from $96\times96$ to $224\times224$. Under this protocol UNI's
full-precision probe reaches $87.81\%$, below its published accuracy, because the probe is
fitted on $8{,}000$ patches rather than the full $262{,}144$-patch training split. The
comparison across tiers is unaffected, since all arms share the protocol, but the absolute
figure should not be taken as UNI's published PatchCamelyon result.

For each model \cref{tab:beyond} reports two evaluation modes. In the task column the probe
is refitted at every tier, measuring how well a fresh classifier can decode the quantized
representation. In the KL column the probe is fitted once on full-precision features and the
same head is applied unchanged to each tier, measuring whether the quantized encoder still
produces the representation the head was trained on rather than merely one a new head can
recover.

All $96$ of UNI's transformer linear layers are quantized while its single convolution, the
patch projection, is held dense. Clay likewise quantizes its $96$ transformer linears at two
bits with group size $512$, holding its seven patch-embedding linears dense, six of which
have fan-in $128$ and therefore admit no full-width group. As \cref{sec:beyond} notes, the
KL column on both encoders falls by roughly an order of magnitude per tier, consistent with
the geometric decay of \cref{prop:decay}.

\section{Stage ordering end to end}
\label{app:stageorder}

\Cref{tab:stages} assigns each stage a data-free constant $\rho$ that measures weight-space
distortion per pass. \Cref{tab:kl} tests whether the ordering induced by $\rho$ carries
through to an end-to-end metric by scoring all four stages on Gemma-3-12B \citep{team2025gemma} under identical
conditions, varying only the stage. In weight space the ordering is confirmed: the measured
constants $\rho^{2}=0.1167$, $0.0907$, $0.0771$ and $0.0742$ on Gemma's
$\mathrm{q\_proj}$ reproduce \cref{tab:stages}'s scalar and $E_8$P rows to three decimal
places. The two trellis rows cannot be compared as directly: \cref{tab:stages} reports only $L=14$,
so no tabulated value exists for $L=12$, and the $L=14$ measurement on these weights is
$3.1\%$ above the tabulated $0.0720$. The four constants nevertheless establish a clear ordering in weight space; what remains to
be determined is whether that ordering carries through to the model's output.

Because all tiers are scored on the same $32$ windows of $2048$ tokens, the comparison is
paired. Pairing is essential here, as the variance across windows substantially exceeds the
differences between stages. At $P{=}1$ every pair separates: the $L=14$
trellis produces the lowest KL and the scalar stage the highest, both by wide margins, while
$E_8$P falls below the $L=12$ trellis by $0.348$ nats ($t=2.38$, lower on $21$ of $32$
windows), the one point at which the measured order departs from $\rho$'s prediction. At
$P{=}2$ both trellis rows fall below $E_8$P and the scalar stage, but $E_8$P and the scalar
stage are no longer separated from each other ($t=0.52$), nor are the two trellis rows
($t=1.19$). At $P{=}3$ only the scalar stage remains distinguishable: the three vector
stages are statistically tied, and the $L=14$ row carries the higher mean despite being
lower than $L=12$ on $19$ of $32$ windows, its mean elevated by a heavier tail. We
report the ordering only at tiers where the paired test resolves it. The overall pattern
is monotone: as the bit budget grows the stage's influence on end-to-end KL progressively
diminishes, and at six bits per weight no pair of stages remains statistically
distinguishable.

\begin{table}[t]
\centering
\caption{Gemma-3-12B, $\mathrm{KL}(p_{\mathrm{bf16}}\,\|\,p_{\mathrm{quant}})$ in nats per
token on held-out WikiText-2, lower is better, reported as mean (standard error) over $32$
windows of $2048$ tokens. Every arm is data-free and differs only in the stage of
\cref{tab:stages}; $g=256$. The unquantized reference is scored once with its tower
offloaded and its log-probabilities cached; all four quantized rows are then evaluated
against that single cache, ensuring an identical reference across rows
(\texttt{experiments/kl\_two\_phase.py}). The $P{=}1$ and $P{=}2$ columns are prefixes of
the same $P{=}3$ artifact, verified to be bit-identical to encoding at that depth directly
(\texttt{experiments/prefix\_identity\_gemma.py}). The first three rows read the published
artifacts; the $L=14$ row is quantized here with all other settings matched, because
\cref{tab:stages} reports $L=14$ while the published artifact uses $L=12$. No cell is bolded,
as the stages are separated at $2$ bits but not at $6$
(\texttt{experiments/kl\_separability.py}).}
\label{tab:kl}
\small
\begin{tabular}{@{}lrrr@{}}
\toprule
Stage & $P=1$ ($2.06$ bit) & $P=2$ ($4.125$ bit) & $P=3$ ($6.19$ bit)\\
\midrule
Scalar Lloyd--Max      & $6.472$ \small{$(0.131)$} & $0.4157$ \small{$(0.0324)$} & $0.1154$ \small{$(0.0194)$}\\
$E_8$P lattice         & $3.648$ \small{$(0.120)$} & $0.3850$ \small{$(0.0533)$} & $0.0502$ \small{$(0.0070)$}\\
Trellis, 1MAD ($L=12$) & $3.996$ \small{$(0.168)$} & $0.2626$ \small{$(0.0300)$} & $0.0440$ \small{$(0.0075)$}\\
Trellis, 1MAD ($L=14$) & $3.068$ \small{$(0.134)$} & $0.2312$ \small{$(0.0279)$} & $0.0581$ \small{$(0.0151)$}\\
\bottomrule
\end{tabular}
\end{table}

\section{Comparison with elastic and static quantization baselines}
\label{app:elastic}

This appendix compares \AQuant{} against published elastic and static methods on
LLaMA2-7B \citep{touvron2023llama2} and LLaMA3-8B \citep{grattafiori2024llama3}, the two models the elastic methods all report. \Cref{tab:elastic} collects the results. The MoBiQuant
and AnyBCQ \citep{park2026anybcq} rows are from \citet{wang2026mobiquant}'s tables without reproduction, as
neither has released code; MoBiQuant's bit-widths are averages over routed tokens whose
router parameters are excluded from the reported rate. Because no published row reports KL
divergence, the table uses perplexity throughout; on our rows the two metrics produce the
same ranking. Our measured full-precision perplexity is $5.503$ against $5.5$ printed on
LLaMA2-7B and $6.1982$ against $6.1$ on LLaMA3-8B; the LLaMA3 discrepancy exceeds any
four-bit difference in \cref{tab:elastic}, so that column supports an ordering but not a
precise margin.

\begin{table}[t]
\centering
\caption{WikiText-2 perplexity at sequence length $2048$; lower is better. Our rows use
the harness of \cref{tab:llm-full}; all others are from \citet{wang2026mobiquant}
(Tables~1 and~2). Bit-widths for published elastic rows are averages over routed tokens
excluding router parameters; ours are stored bits per weight ($4.03$ scalar, $4.06$
vector, $2.03$ throughout). Bold marks the lowest quantized entry per column, excluding
full precision. $^{*}$Reproduced using the released quantizer and evaluation script
(\texttt{experiments/anyprecision\_\allowbreak llama2\_\allowbreak 7b.txt}); the $6.05$
and $2\mathrm{e}3$ in \citet{wang2026mobiquant} do not match the $5.624$ and $35.21$ we
obtain.}
\label{tab:elastic}
\small
\setlength{\tabcolsep}{5pt}
\begin{tabular}{@{}llc|rr|rr@{}}
\toprule
& & & \multicolumn{2}{c|}{LLaMA2-7B} & \multicolumn{2}{c}{LLaMA3-8B} \\
Method & Data & Elast. & $\approx\!4$ bit & $\approx\!2$ bit & $\approx\!4$ bit & $\approx\!2$ bit \\
\midrule
Full precision            & ---                  & --- & $5.503$ & $5.503$ & $6.198$ & $6.198$ \\
\midrule
\AQuant{} trellis (ours)  & none                 & yes & $5.613$ & $71.80$ & $6.550$ & $52.51$ \\
\AQuant{} $E_8$P (ours)   & none                 & yes & $5.723$ & $218.7$ & $6.735$ & $165.7$ \\
\AQuant{} scalar (ours)   & none                 & yes & $5.750$ & $2046$  & $6.817$ & $3134$ \\
\midrule
MoBiQuant                 & $128$ seq., $20$ ep. & yes & $5.82$  & $10.91$ & $7.31$  & $58.12$ \\
AnyBCQ                    & calibration set      & yes & $5.91$  & $15.38$ & $6.84$  & $19.01$ \\
Any-Precision LLM$^{*}$   & calibration set      & yes & $5.624$ & $35.21$ & $6.70$ & $1\mathrm{e}3$ \\
\midrule
QTIP                      & Hessian              & no  & $\mathbf{5.17}$ & $\mathbf{5.86}$ & $\mathbf{6.06}$ & $\mathbf{7.25}$ \\
QuIP\#                    & Hessian              & no  & $5.66$  & $12.3$  & $6.64$  & $15.20$ \\
SpinQuant                 & calibration set      & no  & $5.60$  & ---     & $6.40$  & --- \\
QuaRot                    & calibration set      & no  & $5.61$  & ---     & $7.20$  & --- \\
\bottomrule
\end{tabular}
\end{table}

\paragraph{Any-Precision LLM.} Of the three elastic methods only Any-Precision LLM has
released code. Quantizing with its authors' pipeline reproduces their published $5.62$ at
four bits as $5.624$ and their $6.13$ at three bits as $6.153$; the $6.05$ and
$2\mathrm{e}3$ that \citet{wang2026mobiquant} reports do not match, the two-bit
discrepancy being a factor of $57$. At four bits \AQuant{} achieves $5.613$ against Any-Precision LLM's $5.624$; at two bits
Any-Precision LLM achieves $35.21$ against our $71.80$. \AQuant{} is data-free while Any-Precision LLM uses a
calibration set. Against MoBiQuant and AnyBCQ \AQuant{} achieves lower four-bit
perplexity on both models ($5.613$ against $5.82$ and $5.91$ on LLaMA2-7B, $6.550$
against $7.31$ and $6.84$ on LLaMA3-8B), though these margins should be interpreted with
caution given the discrepancies observed under reproduction.

\paragraph{Static methods and QTIP.} At 4bits, on LLaMA2-7B \AQuant{} achieves $5.613$, within $0.01$ of SpinQuant's $5.60$ \citep{liu2024spinquant} and
QuaRot's $5.61$ \citep{ashkboos2024quarot}, and falls below QuIP\#'s $5.66$. On
LLaMA3-8B it falls below QuIP\# ($6.550$ against $6.64$) but lies above SpinQuant's
$6.40$. QTIP achieves lower perplexity on both models: by $0.44$--$0.49$ points at four
bits and by a factor of seven to twelve at two bits. The four-bit gap reflects the absence of interblock feedback: QTIP propagates
Hessian-guided corrections across blocks, whereas \AQuant{} rounds each group
independently. At two bits all methods except
QTIP produce unusable models on both architectures. \Cref{tab:zeroshot} reports the same
pattern on downstream tasks.

\AQuant{} is the only elastic method in \cref{tab:elastic} that requires no data, and the
only one whose lower-rate artifact is a prefix of the higher-rate one, incurring no
additional cost.

\section{Nested decoding in service}
\label{app:serving}

\begin{figure}[t]
\centering
\includegraphics[width=\linewidth]{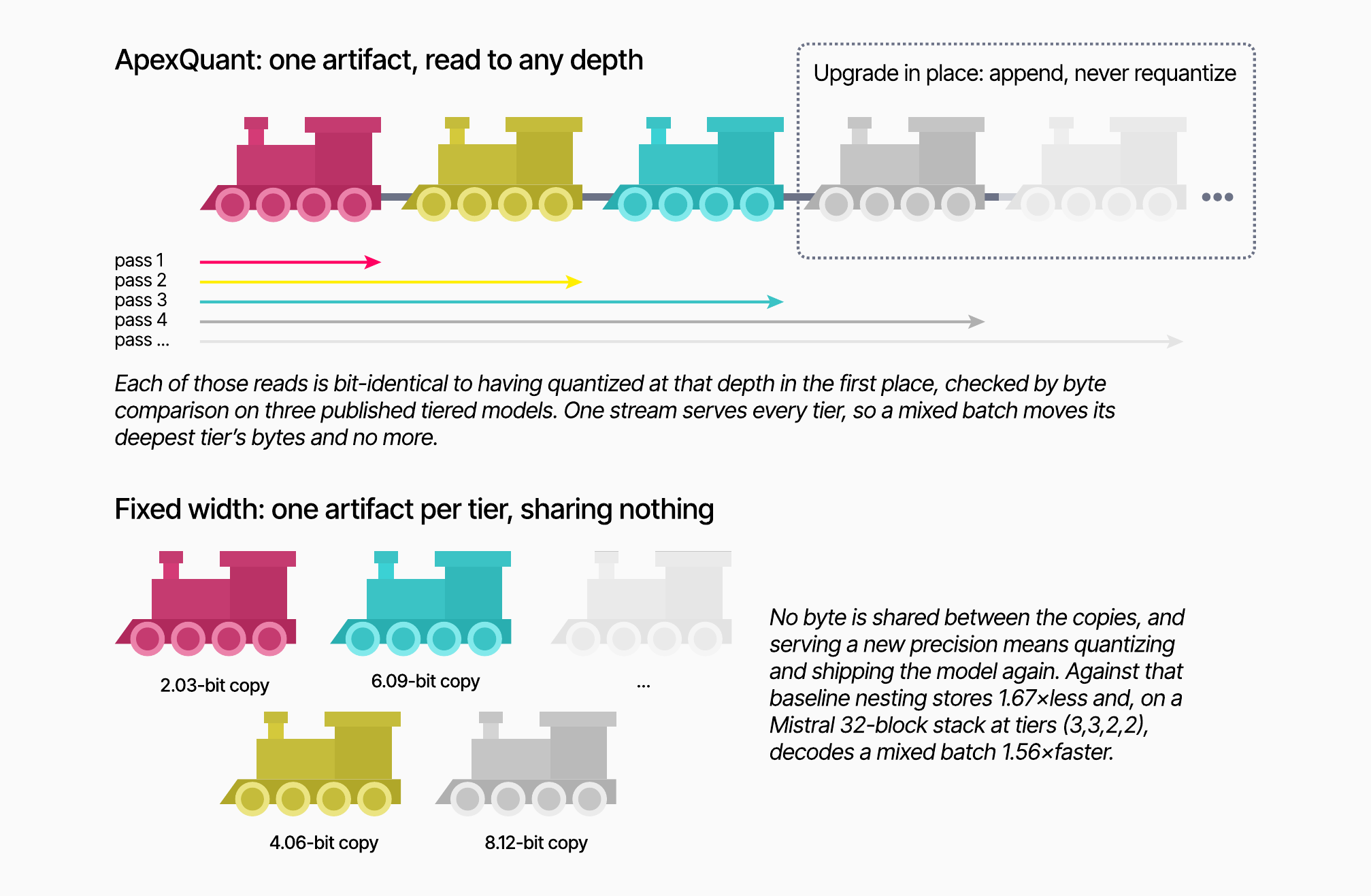}
\caption{What ``elastic'' means in bytes. The passes of one weight row are stored back to back,
so the first $P'$ of them are a contiguous prefix: reading it yields exactly the artifact a
$P'$-pass run would have produced, checked by byte comparison on three published tiered models,
and raising a deployed model's rate appends indices rather than rebuilding it. The reads shown
yield $2.03$, $4.06$ and $6.09$ bits per weight at $P'=1,2,3$, and the one stream that serves
them all costs its deepest tier alone. Fixed-width elasticity has to keep one independent copy
per tier: on the Mistral $32$-block stack at tiers $(3,3,2,2)$ that is a $6.09$ and a $4.06$
copy, $10.16$ bits per weight against nesting's $6.09$, which is where the $1.67\times$ storage
factor comes from. That factor holds unconditionally, while the $1.56\times$ latency factor
needs a batch deep enough for the extra passes to stop being free (\cref{app:serving}).}
\label{fig:nested}
\end{figure}

This appendix details the serving measurements behind \cref{tab:serving} and complements
the elastic storage discussion in \cref{sec:elasticmeasured} and \cref{fig:nested}.
Each pass of the quantizer appends a fixed number of index bits to the stored payload, and
because each pass quantizes the current residual under a rotation drawn independently of all
subsequent passes, the indices written by pass $t$ are fully determined before pass $t{+}1$ is
ever computed. Adding a further pass therefore appends to the byte stream but never alters any
previously written byte, which means that the first $t$ passes of a $P$-pass artifact are
bit-for-bit identical to the output of a standalone $t$-pass quantization.

This nesting property enables mixed-precision batched serving from a single weight stream.
Rows in the batch are sorted by descending tier so that the consumers of pass $p$ form a
contiguous prefix, and one split-K kernel launch per pass covers exactly those rows. Each pass
is read from memory once and applied to every row whose tier reaches it, so the batch's total
weight traffic equals that of its deepest tier alone, rather than the sum over individual rows.

Correctness of the batched path rests on two empirically verified properties. The first is
data nesting as described above: truncating the stored stream to $t$ passes must yield the
same bytes as a standalone $t$-pass quantization, so that every row in the batch receives
exactly the indices it would have received had it been quantized independently at its requested
pass count. The second is kernel correctness: the fused tiered kernel processes the entire
mixed batch in a single launch, and for each row it must produce the same dequantized output
as loading that row's weight at its requested pass count and multiplying against the activation
in isolation. We compare these two outputs row by row and observe worst-case agreement to
$1.8\times10^{-3}$, a tolerance consistent with floating-point non-associativity arising from
the different accumulation orders in the batched and per-row code paths.

We verify data nesting against the published artifacts on the Hub. For each of the three model
families released as tiered artifacts, Llama-3.1-70B-Instruct, Qwen3.5-35B-A3B and
Gemma-3-12B, separate repositories are published at each pass count: three passes storing
$6.09$ bits per weight, two passes at $4.06$, and one pass at $2.03$. The test reads one shard
of the three-pass repository, truncates to its first two passes, and compares the result byte
for byte against the standalone two-pass repository; it then repeats the comparison at one
pass against the one-pass repository. All $213$ code tensors and $213$ scale tensors agree
with no tolerance, and each tier's recorded bits per weight matches $P'(b + 16/g)$ exactly.
Any tensor that the format passes through unquantized is required to be identical as well, so
a slicing bug cannot hide in the metadata. The script is
\texttt{serving\_evals/prefix\_check.py}, and it reports a failure list rather than a summary,
because a check that compares nothing must not print success.

\Cref{tab:serving} measures the latency consequence on a Mistral $32$-block stack at tiers
$(3,3,2,2)$, graph-captured so the comparison is not distorted by launch overhead. Eager
single-round timing reverses the sign of this comparison, so the graph capture is necessary.

The scope of this measurement is narrower than a full decode latency. The harness times
the five linear shapes of one Mistral block at a single decode step, on random weights, and
multiplies by the $32$-block depth; there is no attention, no KV cache, no normalization,
rotary, embedding or sampling in the reported number, and no sequence length, since none
enters. Every row uses our own kernels rather than an external baseline, so the comparisons
are internal: they establish that nesting achieves lower latency than storing one artifact per
tier \emph{through the same code path}, which is the claim being made, and not that either
configuration is faster than a production serving stack. The $85\%$ of roofline quoted below
is likewise a weight-traffic roofline of the quantized format rather than of the dense model.

\begin{table}[h]
\centering
\caption{Decode latency for one batch, Mistral $32$-block stack, tiers $(3,3,2,2)$,
graph-captured under two autotune scopes. $M$ is the batch size. Row D is ours.}
\label{tab:serving}
\small
\begin{tabular}{@{}llrr@{}}
\toprule
& Scenario & fast (ms) & max (ms) \\
\midrule
A & Naive sequential, four batch-1 decodes           & $28.11$ & $28.19$ \\
B & Premium-only batch, $M{=}2$, three passes        & $19.38$ & $13.46$ \\
C & Homogeneous premium batch, $M{=}4$               & $19.20$ & $14.02$ \\
D & \textbf{Tiered batch, $M{=}4$, mixed (ours)}     & $\mathbf{19.47}$ & $\mathbf{13.95}$ \\
E & Separate artifacts, three-pass plus a distinct two-pass & $31.81$ & $21.76$ \\
\bottomrule
\end{tabular}
\end{table}

We report three readings from \cref{tab:serving}, all at the max autotune scope.

Adding two standard-tier requests to a premium batch incurs a marginal cost of
$D-B = 0.49$\,ms, against roughly $9$\,ms for those two requests to decode on their own,
approximately one eighteenth of their standalone cost. A sweep over every mixture at $M=4$ and
$M=8$ places the riders' marginal cost between $0.7\%$ and $6.9\%$ of standalone, with one
cell at $-5.0\%$ that we read as noise about zero rather than as a saving, so the ratio is
representative rather than a chosen operating point.

Serving a mixed-quality batch incurs the same cost as a homogeneous premium batch:
$D/C = 0.995$, and the mixture is if anything marginally cheaper because the deepest pass
covers fewer rows.

Nesting also achieves lower latency and smaller storage than fixed-width elasticity. Against
storing one artifact per tier, which shares no bytes between tiers, nesting is $1.56\times$
faster ($E/D$) and $1.67\times$ smaller ($6.09$ against $10.16$ bits per weight), with
kernel-identical comparisons. Serving the tiers sequentially instead incurs a $2.02\times$
penalty ($A/D$).

At matched storage, a single wider pass achieves lower distortion than multiple narrow passes,
because each stage quantizes above the rate--distortion bound and the per-pass rate loss
$\Delta_1$ reported in \cref{tab:stages} therefore accumulates linearly with the number of
passes (\cref{prop:greedy}); \cref{app:llmtable} confirms the ordering on every model. The
nested artifact is therefore not retained as a rate--distortion improvement over a single wider
quantizer at the same bit budget. Its value is operational: one stored artifact serves every
tier from premium to standard as a contiguous prefix read, upgrading precision is an append
rather than a requantization, and the $1.67\times$ storage saving over fixed-width elasticity
holds regardless of the batch regime. These properties have no analogue in a single-rate
quantizer, which must be rebuilt entirely to change its operating point.

The scope sensitivity of these results bounds the claim in two ways. Triton's wider autotune
sweep (the max column of \cref{tab:serving}) improves every mid-$M$ scenario by roughly
$1.4\times$ over the narrower sweep, but yields no further improvement on the batch-1
sequential path, whose three-pass token already utilizes $85\%$ of the available memory
bandwidth for weight reads; its one- and two-pass tokens reach $63\%$ and $77\%$. Inside the
memory-bound window, serving every request at premium precision is as cheap as tiering, so the
latency advantage requires a batch large enough that the premium tier's extra passes are no
longer free. What tiering provides unconditionally is the $1.67\times$ storage factor and the
ability to serve a mixture from one artifact.

Every scenario in \cref{tab:serving} is on the decode path, where the arithmetic is
memory-bound and the extra passes ride in the shadow of the weight read. Prefill is
compute-bound, so there the passes lie on the critical path and a $P$-pass layer performs $P$
times the dequantize-multiply work; we measure no prefill scenario and claim none.

\paragraph{Distinction from speculative decoding.}
A speculative draft is a separate, smaller model with its own weights, and correctness rests
on an acceptance test that preserves the target distribution, with a rollback when the draft
is rejected. A nested prefix is the \emph{same} weights read to fewer passes, so it requires
no additional storage, and there is no verifier and nothing to roll back: a request is served
at the precision it asked for. A $P'$-prefix can also act as the draft for a $P$-pass target
without requiring extra storage. We evaluate that composition: with our own kernels making the
target $1.9\times$ faster, the measured cycle runs at $0.67$ to $0.76\times$ of plain
decoding, so prefix speculation does not pay. The reason is structural and persists regardless
of engineering improvements: progress on the target raises the bar the draft must clear in
exact proportion, so the faster the kernels the worse weight-prefix speculation performs. The
elasticity and the kernels are the contribution; the draft reuse is not.

\section{The calibration pass}
\label{app:calib}

\begin{figure}[t]
\centering
\includegraphics[width=\linewidth]{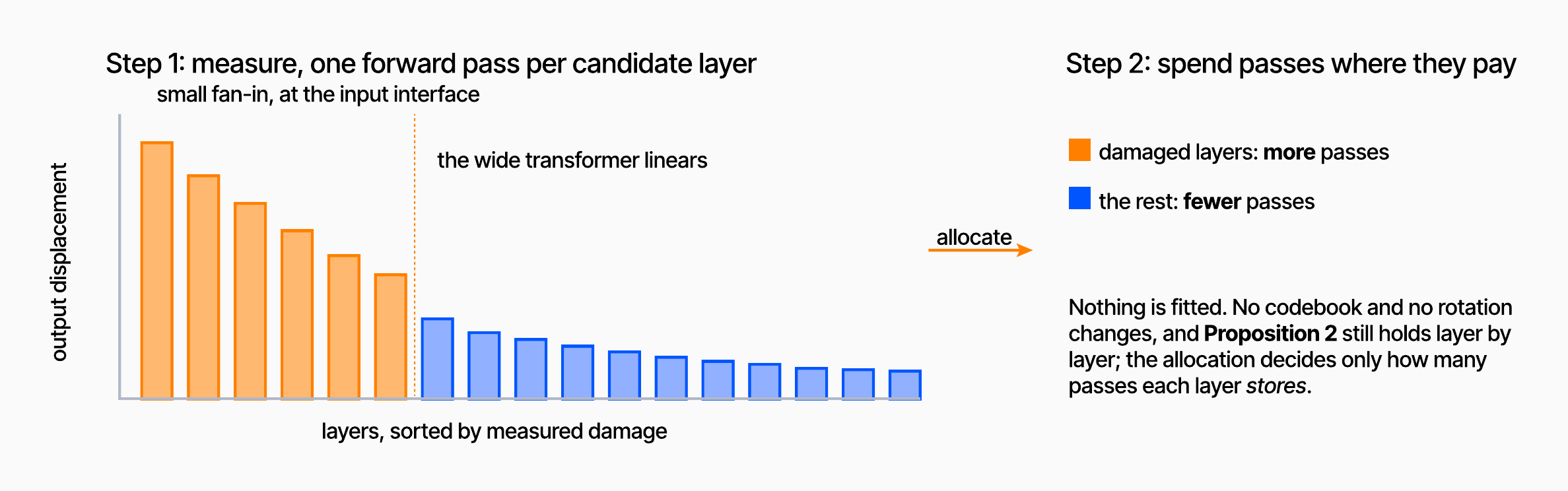}
\caption{The calibration pass, the single exception to reading only weights. Layers differ in
how much a given quantization error hurts the model, so the allocation measures that directly
rather than inferring it: each candidate layer is quantized alone with the rest left at full
precision and the displacement of the model's output is recorded, $97$ forward passes on Clay
rather than one in total. The ranking is then spent as pass counts, and nothing is refitted.
What it buys is a strict Pareto improvement in \emph{embedding} error, $0.0406$ at $4.99$ bits
per weight against $0.0534$ at $6.19$ for uniform: lower error \emph{and} less storage. What it
does not buy is the downstream task, where the allocated arm sits $0.15$ points \emph{below}
uniform on the EuroSAT probe, about a third of one standard error, for $0.8$ more stored bits;
on language models the sign flips between models. The gain concentrates on a small slice of
layers rather than spreading evenly, which is the structure an unweighted weight-space
objective cannot see. Which slice we do not claim: this run keeps the patch-embedding linears
dense throughout, so it is not the input interface the panel labels (\cref{app:calib}).}
\label{fig:allocation}
\end{figure}

\paragraph{Mechanism.}
The calibration pass is the single optional step that reads data (\cref{fig:allocation}).
Sensitivity is measured by a leave-one-out protocol: for each candidate layer $l$ the
encoder runs with $l$ quantized and every other layer at full precision, and the displacement
of the model's output is recorded. This requires one forward pass per candidate layer plus
one full-precision reference, totalling $97$ forwards on Clay's $96$-layer encoder.

The resulting ranking is consumed by a greedy allocator: all layers receive a base number of
passes, and the top-$k\%$ most sensitive layers receive additional passes. For example,
``base $2$ $+3$ at $25\%$'' assigns two passes to every layer and three extra passes to the
most sensitive quarter. No codebook is re-estimated, no rotation changes, and
\cref{prop:decay} continues to hold per layer; the allocation decides only how many passes
each layer stores.

\paragraph{Results on the vision encoder.}
\Cref{fig:allocation} summarises the outcome on Clay v1.5, measured end to end as embedding
relative error against the full-precision forward:

\begin{center}
\small
\begin{tabular}{@{}lrr@{}}
\toprule
Allocation & avg bits/weight & rel.\ embedding error \\
\midrule
uniform, one pass                & $2.06$ & $0.4990$ \\
uniform, two passes              & $4.13$ & $0.1576$ \\
uniform, three passes            & $6.19$ & $0.0534$ \\
\midrule
calibrated, base $2$ $+2$ at $10\%$ & $4.48$ & $0.0831$ \\
calibrated, base $2$ $+2$ at $15\%$ & $4.60$ & $0.0652$ \\
calibrated, base $2$ $+3$ at $25\%$ & $\mathbf{4.99}$ & $\mathbf{0.0406}$ \\
\bottomrule
\end{tabular}
\end{center}

The last row Pareto-dominates the uniform three-pass allocation: lower error ($0.0406$
against $0.0534$) at $1.2$ fewer bits per weight. The ranking is stable across probe draws,
with Spearman correlation between $0.978$ and $0.987$ over eight independent synthetic
inputs and the top-$76$ layer set identical across all pairs. However, on the downstream
EuroSAT probe of \cref{sec:beyond} the allocation does not improve accuracy: the allocated
arm reaches $96.10\%$ against the uniform two-pass arm's $96.25\%$ at $4.92$ against $4.13$
stored bits, a difference well within the test set's standard error of $0.43$ points. The
Pareto gain is therefore confined to embedding error and does not extend to downstream task
accuracy on this benchmark.

\paragraph{Failure on language models.}
The same approach does not transfer to language models. On WikiText-2 KL against FP16,
measured over eight chunks at sequence length $2048$ with the $2$-bit scalar stage at
$g=512$:

\begin{center}
\small
\begin{tabular}{@{}llrrr@{}}
\toprule
Model & Budget & uniform KL & allocated KL & ratio \\
\midrule
\multirow{2}{*}{TinyLlama-1.1B}
 & $4.06$ bit & $0.1276$ & $\mathbf{0.0817}$ & $1.56\times$ better \\
 & $6.09$ bit & $0.0120$ & $\mathbf{0.0067}$ & $1.78\times$ better \\
\midrule
\multirow{2}{*}{Phi-1.5}
 & $4.06$ bit & $\mathbf{0.0573}$ & $0.0951$ & $1.66\times$ worse \\
 & $6.09$ bit & $\mathbf{0.0064}$ & $0.0077$ & $1.20\times$ worse \\
\bottomrule
\end{tabular}
\end{center}

On TinyLlama the allocation improves KL at both budgets, but on Phi-1.5 at four bits the
allocated artifact is $1.66\times$ worse than uniform. A validation gate that compares the
two and retains the allocation only when it improves correctly rejects this case and falls
back to uniform. At six bits the degradation is smaller ($1.20\times$), within the range a
single-evaluation gate does not reliably detect, and the allocation is shipped. A
multi-evaluation gate or a more robust sensitivity estimator could close this gap, but we
leave both to future work and restrict the calibration pass to the vision encoder in its
current form.

\end{document}